  \let\oldparagraph\paragraph
  \renewcommand{\paragraph}{
    \@ifstar
      \xxxParagraphStar
      \xxxParagraphNoStar
  }
  \newcommand{\xxxParagraphStar}[1]{\oldparagraph*{#1}\mbox{}}
  \newcommand{\xxxParagraphNoStar}[1]{\oldparagraph{#1}\mbox{}}
  \let\oldsubparagraph\subparagraph
  \renewcommand{\subparagraph}{
    \@ifstar
      \xxxSubParagraphStar
      \xxxSubParagraphNoStar
  }
  \newcommand{\xxxSubParagraphStar}[1]{\oldsubparagraph*{#1}\mbox{}}
  \newcommand{\xxxSubParagraphNoStar}[1]{\oldsubparagraph{#1}\mbox{}}
\patchcmd\longtable{\par}{\if@noskipsec\mbox{}\fi\par}{}{}
\def\maxwidth{\ifdim\Gin@nat@width>\linewidth\linewidth\else\Gin@nat@width\fi}
\def\maxheight{\ifdim\Gin@nat@height>\textheight\textheight\else\Gin@nat@height\fi}
\def\fps@figure{htbp}
  \renewcommand*\contentsname{Table of contents}
  \newcommand\contentsname{Table of contents}
  \renewcommand*\listfigurename{List of Figures}
  \newcommand\listfigurename{List of Figures}
  \renewcommand*\listtablename{List of Tables}
  \newcommand\listtablename{List of Tables}
  \renewcommand*\figurename{Figure}
  \newcommand\figurename{Figure}
  \renewcommand*\tablename{Table}
  \newcommand\tablename{Table}
\theoremstyle{plain}
\newtheorem{theorem}{Theorem}[section]
\newtheorem{proposition}[theorem]{Proposition}
\DeclareMathOperator*{\argmax}{arg\,max}
\DeclareMathOperator*{\argmin}{arg\,min}
\renewcommand{\arraystretch}{0.9} 
\newcommand{\anon}{1}
\begin{document}

\def\spacingset#1{\renewcommand{\baselinestretch}%
{#1}\small\normalsize} \spacingset{1}


\if1\anon
{
  \title{\bf A unifying Bayesian framework for adversarial robustness}
  \author{Pablo G. Arce\thanks{
    Research supported by the AFOSR-EOARD award FA8655-21-1-7042, and the Spanish Ministry of Science program PID2021-124662OB-I00.
PGA is a staff member hired under the Generation D initiative, promoted by Red.es, an organisation attached to the Ministry for Digital Transformation and the Civil Service, for the attraction and retention of talent through grants and training contracts, financed by the Recovery, Transformation and Resilience Plan through the European Union's Next Generation funds.}\hspace{.2cm}\\
    Universidad Autónoma de Madrid, Escuela de Doctorado, Madrid, Spain\\
    Institute of Mathematical Sciences, Spanish National Research Council, Madrid, Spain,\\
    Roi Naveiro \\
    CUNEF Universidad, Madrid, Spain \\
    and \\
    David Ríos Insua \\
    Institute of Mathematical Sciences, Spanish National Research Council, Madrid, Spain\\}
  \maketitle
} \fi

\if0\anon
{
  \bigskip
  \bigskip
  \bigskip
  \begin{center}
    {\LARGE\bf A unified Bayesian framework for adversarial robustness}
\end{center}
  \medskip
} \fi

\bigskip
\begin{abstract}
The vulnerability of machine learning models to adversarial attacks remains a critical societal security challenge. Traditional defenses, such as adversarial training, typically robustify models by minimizing a worst-case loss. These deterministic approaches do not account for uncertainty in the adversary's attack. While stochastic defenses placing a probability distribution on the adversary exist, they often lack statistical rigor and fail to make explicit their underlying assumptions. To resolve these issues, we introduce a formal Bayesian framework that models adversarial uncertainty through a stochastic channel, articulating all probabilistic assumptions. This yields two robustification strategies: a proactive defense enacted during training, aligned with adversarial training, and a reactive defense enacted during operations, aligned with adversarial purification. Several state-of-the-art defenses can be recovered as limiting cases of our model. We empirically validate our methodology, showcasing the benefits of explicitly modeling adversarial uncertainty.
\end{abstract}

\noindent%
{\it Keywords:} Machine learning, Security, Bayesian analysis, Attacks, Defenses.
\vfill

\newpage
\spacingset{1.8} 

\section{Introduction}

The increasing importance of machine learning, amplified by large language models, underscores the transformative potential of AI \citep{zhao2023survey}. However, this progress is shadowed by security issues, particularly adversarial attacks, which have given rise to the relatively recent field of {\em adversarial machine learning} (AML) \citep{adversarialClassification2004,joseph}. By maliciously manipulating test inputs to alter model predictions and corrupt downstream decisions, adversaries break the core i.i.d. assumption underlying standard machine learning (ML) predictive models \citep{murphy2023probabilistic}, forcing the need for adversarially robust algorithms.
While AML is maturing for classical, point-estimate ML models, the adversarial robustness of Bayesian predictive models remains a critical and underexplored frontier still under foundational debates \citep{feng2024attacking}. This is a significant gap, as Bayesian methods are essential in high-stakes domains where principled uncertainty quantification is paramount. Existing work has mainly focused on demonstrating vulnerabilities of these models to adversarial data manipulations \citep{arce2025evasion}, but a principled foundation for designing defenses is still largely absent.

This paper bridges this gap by reframing the fundamental approach to adversarial defense. Rather than adopting the standard AML paradigm, which formulates defense as a deterministic minimax game against a worst-case adversary, we cast it as a hierarchical modeling problem. From a statistical perspective, the presence of an adversary manipulating data is essentially a severe form of model misspecification. While traditional robust statistical methods typically relax the likelihood to handle such misspecifications \citep{miller2019robust}, we argue that adversarial attacks possess specific structural properties that should be explicitly modeled rather than largely bypassed. We therefore introduce a fully Bayesian framework that captures the attacker's behavior via a stochastic \emph{adversarial channel}: a probabilistic model that incorporates our prior beliefs about the threat process, describing how latent, clean inputs are corrupted into the observations the model receives.
Our contributions include: (i) a statistically grounded Bayesian framework for adversarial defense that makes all probabilistic assumptions transparent; (ii) the derivation of two complementary strategies, a reactive defense that infers latent clean inputs from corrupted data during model deployment, and a proactive defense that builds inherent robustness 
during training, along with tractable inference schemes that enable learning against complex mixtures of adversaries; (iii) a demonstration that our framework generalizes prior art by recovering prominent AML defenses as limiting cases, such as adversarial training (AT) and randomized smoothing (RS); and (iv) an empirical validation demonstrating that our approach outperforms standard AT in both robustness and calibration.


%
%
%

\section{Related Work}

AML has gained significant attention as adversaries can manipulate data inputs to change model inferences or predictions, ultimately altering critical downstream decisions \citep{joseph,vorobeichikantar,riosInsua2023}. While early AML work focused on classification \citep{adversarialClassification2004, goodfellow2014explaining}, the impact of these vulnerabilities is now recognized across diverse learning tasks, including regression \citep{arce2025evasion} and reinforcement learning \citep{gallego2019reinforcement}. In general, defenses in AML fall into two categories: {\em proactive}, enacted during training by modifying the learning process to anticipate future adversaries, and {\em reactive}, deployed during testing or operations by altering how predictions are made to account for potential input corruption. However, most existing strategies suffer from two fundamental limitations: 1) they are essentially deterministic, failing to quantify uncertainty about the adversary's actions, and 2) they address classical, point-estimate predictive models, not Bayesian ones.

The most prominent \emph{proactive} AML \emph{defense} is AT \citep{madry2018towards}, which frames the problem through minimax optimization: an inner loop finds a worst-case attack, and an outer loop trains the model to minimize its loss on these attacks. While providing strong empirical protection, AT perfectly illustrates the limitations highlighted. By optimizing against a singular, worst-case perturbation, its formulation assumes a deterministic threat model that ignores the uncertainty in the adversary's strategy. Furthermore, as AT is inherently designed for point-estimate models, it lacks a native mechanism to protect a full predictive distribution. Variants like TRADES \citep{zhang2019theoretically} or adversarial logit pairing (ALP) \citep{kannan2018adversarial} decompose the loss into classification and regularization terms, and Ensemble AT \citep{tramer2017ensemble} diversifies attack generation; yet, the core paradigm remains unchanged. Even heuristic attempts to accommodate uncertainty, such as curriculum \citep{cai2018curriculum} or adaptive \citep{balaji2019instance} training, fail to provide a rigorous, natively probabilistic defense.

Complementary to these are \emph{reactive defenses}. The subfield of \emph{adversarial purification} (AP) aims to remove perturbations from corrupted input before classification, focusing on restoring a single ``clean'' input for a non-Bayesian model, rather than propagating the uncertainty about the original input through a posterior. In \textit{model-agnostic} AP strategies, a generative model trained on clean data is used to purify inputs by projecting them back to the learned data manifold, independent of the downstream predictive model. A prime example \citep{nie2022diffusion} uses diffusion models to iteratively denoise the input to find a likely clean predecessor. In contrast, \textit{model-guided} strategies leverage downstream predictive model parameters to actively shape purification. Instead of just seeking a plausible clean instance, they seek one that the specific predictive model is likely to predict correctly. For instance, {\em Atop} \citep{lin2024adversarial} uses the classifier gradients to guide a generative process, steering it towards high-confidence regions of the model decision boundary. Another key strategy is RS \citep{cohen2019certified}. Instead of deterministically purifying an input, it constructs a new, certifiably robust classifier by predicting the majority vote of a base classifier over several noisy versions of the input. Although not Bayesian, RS is inherently probabilistic, as it smooths the decision boundary by convolving it with a noise distribution, bridging towards uncertainty-aware defenses, though in its standard form it does not involve posterior inference over model parameters.

These classical paradigms are well-established, yet their limitations are particularly acute when facing \emph{Bayesian models}, whose attack surface is larger: adversaries can target not only point predictions but also entire posterior predictive distributions (PPD) \citep{arce2025evasion}. This makes the adversarial robustness of Bayesian models a critical and developing frontier. Despite early hopes that Bayesian methods might be inherently adversarially robust \citep{de2021adversarial}, recent findings show that this is not guaranteed, through attacks like PGD$^+$ \citep{feng2024attacking} or those in \cite{arce2025evasion} and \cite{carreau2025poisoning}.

Prior attempts to create distinctly Bayesian defenses have faced their own challenges. Some are heuristic, such as considering distributions of attacks to inform a distributional AT \citep{dong2024enhancing}. There are more formal ones like Bayesian Adversarial Learning  \citep{ye2018bayesian}, which introduces a framework based on Gibbs sampling to approximate a robust posterior. However, as proved in Section 1 of the Supplementary Material (SM-1), the conditional distributions defining their sampler cannot be derived from a single, valid joint posterior distribution. More principled frameworks, like that of \cite{gallego2024protecting}, have been limited in scope to classification problems.

Beyond these specific adversarial defenses, our work also relates to the broader theme of model robustness in Bayesian statistics \citep{berger2000bayesian}, including coarsened posteriors \citep{miller2019robust} and generalized variational inference (VI) \citep{knoblauch2022optimization}. These treat data corruption (and by extension, adversarial attacks) as a form of model misspecification and provide rigorous tools for handling general misspecifications. However, they do not explicitly model the unique structure of adversarial attacks. Given that the mechanisms of such attacks are well-characterized, it seems advantageous to model such information explicitly.
 
The absence of a framework that captures this specificity, while being both probabilistic and natively designed for Bayesian models, motivates our work. To our knowledge, we introduce the first statistically rigorous and fully Bayesian framework that addresses these gaps. It not only models adversarial uncertainty via a stochastic channel (specifically designed to adversarially robustify Bayesian predictive distributions) but also generalizes prior art, recovering prominent defenses like AT and RS as limiting cases. 


\section{Methodology}
\subsection{Problem Formulation}

We frame our problem within the setting of Bayesian predictive models. Data $(\mathbf{x}, y)$ are drawn from a joint distribution $p(\mathbf{x}, y \mid \phi, \theta)$, factorized as $p(\mathbf{x} \mid \phi)p(y \mid \mathbf{x}, \theta)$, where $\phi$ and $\theta$ respectively parameterize the covariate and the conditional label distributions. Given a clean training dataset $\mathcal{D} = \{(\mathbf{x}_i, y_i)\}_{i=1}^N$, the standard objective is to learn the posterior $p(\phi, \theta \mid \mathcal{D})$ over these parameters and use it to form the PPD $p(y \mid \mathbf{x}, \mathcal{D})$ to make predictions on $y$ for a new test input $\mathbf{x}$.
The challenge we address may arise at deployment. Under inference-time attacks, we no longer observe the clean test input $\mathbf{x}$. Instead, an adversary corrupts it through an \textit{adversarial channel} to produce the observation $\mathbf{x}'$ in an attempt to confound the labeling process. The central problem is therefore to provide a reliable prediction for $y$ given only the corrupted $\mathbf{x}'$. To account for the uncertainty in  such corruption process, we model this channel through a conditional distribution, $p(\mathbf{x}'|\mathbf{x}, \theta)$, allowing its form to depend on the model parameters reflecting an adversary with potential access to the model posterior distribution. Importantly, this represents an adversary's mixed strategy in a game-theoretic sense, rather than fixed emission noise. This formulation accounts for adversarial uncertainty (e.g., due to imperfect information) and subsumes the deterministic worst-case adversary of standard AT, see Section \ref{subsec:at}. 

To solve this, we propose two defensive strategies. The first one is \textit{reactive}, designed to protect the model during operations. It uses a standardly trained model but, upon receiving a possibly corrupted input at test time, delivers a robust inference mechanism to account for the adversarial channel, in the spirit of {\em adversarial purification} methods.
The second one is \textit{proactive}, building adversarial robustness directly within the training phase. By integrating the adversarial channel into the learning objective, this method yields a novel Bayesian formulation of classic AT paradigms.

The following subsections develop these formal models and their learning schemes. They rely on two different probabilistic graphical models (PGMs) presented in Figures \ref{fig:operations} and \ref{fig:train}. While addressing the same challenge, both strategies are distinct and lead to different PPDs (see SM-2).
In their conception, the definition of an adversarial channel is key, allowing for accommodating different assumptions about the adversary. In its simplest form, the channel could be an attack-agnostic model, like an isotropic Gaussian noise, which connects our framework to defenses like RS (see Section \ref{subsec:at}). Alternatively, it could be an attack-based channel, taking a deterministic attack, such as projected gradient descent (PGD), and making it probabilistic by placing priors  over its parameters or injecting noise into its outputs. To model a more sophisticated adversary, this can be extended to a \emph{mixture channel}, where each component is itself a full probabilistic attack-based channel (e.g., one for \cite{carlini2017towards} (CW), one for  PGD), each with their probabilities. Finally, the channel could be a learned generative model, where a separate neural network (NN) is trained to produce the attack distribution.
SM-5 provides a detailed guide on channel specification. 
Our experiments explore both attack-based and learned channels to demonstrate this versatility.




\subsection{Protection During Operations}\label{operations}

A natural approach to defending a model during deployment is to assume that the labeling mechanism is invariant under attack: while an adversary may corrupt an input from $\mathbf{x}$ to $\mathbf{x}'$, the label $y$ remains conditionally dependent only on the parameters $\theta$ and the original, now latent, covariate vector $\mathbf{x}$. Figure~\ref{fig:operations} captures this, depicting a standard training phase with clean data $\mathcal{D}=\{(\mathbf{x}_i,y_i)\}_{i=1}^N$ used to learn the posterior over $\phi$ and $\theta$. At test time, the defense is enacted upon observing a possibly corrupted input $\mathbf{x}'_j$. It reasons backward through the adversarial channel to infer the latent clean input $\mathbf{x}_j$ and thereby predict the label $y_j$.

\begin{figure}[!h]
\centering
\begin{tikzpicture}[>=stealth, node distance = 1.0cm and 1.6cm]

\node[latent]                       (phi)   {$\phi$};
\node[latent, right=2.5cm of phi]   (theta) {$\theta$};

\node[obs,    below=1.2cm of phi]   (xtr)  {$\mathbf{x}_i$};
\node[obs,    right=1.8cm of xtr]   (ytr)  {$y_i$};

\edge {phi} {xtr};
\path (xtr)   edge[->, bend left =  0] (ytr);
\path (theta) edge[->, bend right = 0] (ytr);

\plate {train}{(xtr)(ytr)} {$i = 1,\dots,N$};
\node[anchor=south west, inner sep=6pt, font=\scriptsize] at (train.south west) {Train};

\node[latent, right=3.5cm of xtr]  (xtest)  {$\mathbf{x}_j$};
\node[obs,    below=0.8cm of xtest] (xptest) {$\mathbf{x}'_j$};
\node[latent, right=1.8cm of xtest] (ytest)  {$y_j$};

\path (phi) edge[->, bend left = 0] (xtest);
\edge {xtest} {xptest};
\path (xtest) edge[->, bend left =  0] (ytest);

\path (theta) edge[->, bend right = 20] (xptest);
\path (theta) edge[->, bend right = 0] (ytest);

\plate {test}{(xtest)(xptest)(ytest)} {$j = 1,\dots,M$};
\node[anchor=south west, inner sep=6pt, font=\scriptsize] at (test.south west) {Test};

\end{tikzpicture}
\caption{ PGM for reactive defense. Nodes represent random variables. Arrows, conditional dependencies. Shaded, observed variables. Unshaded, latent variables.} \label{fig:operations}
\end{figure}

Throughout this subsection, $p(\cdot)$ will denote either a density or a probability mass function with respect to the natural base measure of each variable. For instance, when the covariates are continuous and the labels are discrete, $p(\mathbf{x}\mid\phi)$ and $p(\mathbf{x}'\mid\mathbf{x},\theta)$ are densities, whereas $p(y\mid\mathbf{x},\theta)$ is a class probability; integrals are then only over continuous latent quantities such as $\mathbf{x}_j$. If the covariates are discrete, the corresponding integrals are replaced by sums. If the adversarial channel is deterministic or, otherwise, not absolutely continuous, $p(\mathbf{x}'\mid\mathbf{x},\theta)$ should be read as a Markov kernel, for example a Dirac measure at an attack output. We assume that the normalizing constants below are finite and positive for the observed corrupted input. Under the PGM in Figure~\ref{fig:operations}, the joint density/mass of the $N$ training data and a test instance factorizes as
\begin{equation*}
\begin{aligned}
&p(\theta,\phi,\mathcal{D},\mathbf{x}_j,\mathbf{x}'_j,y_j) =
p(\theta)p(\phi)\left[
\prod_{i=1}^N 
p(\mathbf{x}_i\mid \phi)
p(y_i\mid \mathbf{x}_i,\theta)\right]\left[
\,
p(\mathbf{x}_j\mid \phi)
p(\mathbf{x}'_j\mid \mathbf{x}_j,\theta)
p(y_j\mid \mathbf{x}_j,\theta)\right].
\end{aligned}
\end{equation*}
This factorization formalizes a key invariance assumption: the adversary affects the observed covariates through the channel $p(\mathbf{x}'_j\mid \mathbf{x}_j,\theta)$, but the conditional distribution of the label remains $p(y_j\mid \mathbf{x}_j,\theta)$.

The reactive defense computes the robust PPD $p(y_j\mid \mathbf{x}'_j,\mathcal D)$. A full Bayesian treatment requires marginalizing over all unobserved quantities,
\begin{equation} \label{eq:op_predictive}
p(y_j\mid \mathbf{x}'_j,\mathcal{D})
=
\iiint
p(y_j\mid \mathbf{x}_j,\theta)
p(\mathbf{x}_j,\theta,\phi\mid \mathbf{x}'_j,\mathcal{D})
\,d\mathbf{x}_j\,d\theta\,d\phi .
\end{equation}
This expression depends on the joint posterior $p(\mathbf{x}_j,\theta,\phi\mid \mathbf{x}'_j,\mathcal{D})$, hence coupling the latent clean input $\mathbf{x}_j$ with the global parameters $(\theta,\phi)$ and the full training dataset $\mathcal D$. The following proposition rewrites this PPD in two equivalent ways: first as a nested expectation that separates latent-input inference from parameter updating, and then as a ratio of expectations under the clean-data posterior $p(\theta,\phi\mid\mathcal D)$.

\begin{proposition}[Reactive PPD]\label{prop:reactive_ppd}
Under the reactive model in Figure~\ref{fig:operations}, the PPD in \eqref{eq:op_predictive} can be written as
\begin{equation}\label{eq:exact_reactive_nested}
p(y_j\mid \mathbf{x}'_j,\mathcal D)
=
\mathbb{E}_{(\theta,\phi)\mid \mathbf{x}'_j,\mathcal D}
\left[
\mathbb{E}_{\mathbf{x}_j\mid \mathbf{x}'_j,\theta,\phi}
\left(
p(y_j\mid \mathbf{x}_j,\theta)
\right)
\right].
\end{equation}
Let
\begin{equation*}
m_{\theta,\phi}(\mathbf{x}'_j)
=
\int 
p(\mathbf{x}'_j\mid \mathbf{x}_j,\theta)
p(\mathbf{x}_j\mid \phi)
\,d\mathbf{x}_j
=
\mathbb{E}_{\mathbf{x}_j\mid\phi}
\left[p(\mathbf{x}'_j\mid \mathbf{x}_j,\theta)\right]
\end{equation*}
be the marginal channel likelihood of the corrupted input under parameters $(\theta,\phi)$. Then, the same PPD can be evaluated from the clean-data posterior as
\begin{equation}\label{eq:exact_reactive_ratio}
p(y_j\mid \mathbf{x}'_j,\mathcal D)
=
\frac{
\mathbb{E}_{(\theta,\phi)\mid \mathcal D}
\left[
\mathbb{E}_{\mathbf{x}_j\mid \phi}
\left\{
p(y_j\mid \mathbf{x}_j,\theta)
p(\mathbf{x}'_j\mid \mathbf{x}_j,\theta)
\right\}
\right]
}{
\mathbb{E}_{(\theta,\phi)\mid \mathcal D}
\left[
m_{\theta,\phi}(\mathbf{x}'_j)
\right]
}.
\end{equation}
\end{proposition}

\begin{proof}
The chain rule, together with the conditional independence of $\mathbf{x}_j$ from $\mathcal D$ given $(\mathbf{x}'_j,\theta,\phi)$, gives
\[
p(\mathbf{x}_j,\theta,\phi\mid \mathbf{x}'_j,\mathcal D)
=
p(\mathbf{x}_j\mid \mathbf{x}'_j,\theta,\phi)
p(\theta,\phi\mid \mathbf{x}'_j,\mathcal D).
\]
Substitution into \eqref{eq:op_predictive} proves \eqref{eq:exact_reactive_nested}.

To obtain the ratio form \eqref{eq:exact_reactive_ratio}, conditionally on $(\theta,\phi)$, the joint density/mass of $(y_j,\mathbf{x}'_j)$ is obtained by marginalizing the latent clean input
\[
p(y_j,\mathbf{x}'_j\mid \theta,\phi)
=
\int
p(y_j\mid \mathbf{x}_j,\theta)
p(\mathbf{x}'_j\mid \mathbf{x}_j,\theta)
p(\mathbf{x}_j\mid \phi)
\,d\mathbf{x}_j.
\]
Averaging with respect to the posterior $p(\theta,\phi\mid \mathcal D)$ gives
\[
p(y_j,\mathbf{x}'_j\mid \mathcal D)
=
\mathbb{E}_{(\theta,\phi)\mid \mathcal D}
\left[
\mathbb{E}_{\mathbf{x}_j\mid \phi}
\left\{
p(y_j\mid \mathbf{x}_j,\theta)
p(\mathbf{x}'_j\mid \mathbf{x}_j,\theta)
\right\}
\right].
\]
Similarly,
\[
p(\mathbf{x}'_j\mid \mathcal D)
=
\mathbb{E}_{(\theta,\phi)\mid \mathcal D}
\left[
m_{\theta,\phi}(\mathbf{x}'_j)
\right].
\]
Dividing $p(y_j,\mathbf{x}'_j\mid \mathcal D)$ by $p(\mathbf{x}'_j\mid \mathcal D)$ yields \eqref{eq:exact_reactive_ratio}.
\end{proof}

Proposition~\ref{prop:reactive_ppd} shows that the exact reactive defense performs two Bayesian updates at test time. For fixed $(\theta,\phi)$, it infers the latent clean input $\mathbf{x}_j$ from the corrupted observation $\mathbf{x}'_j$ through
$p(\mathbf{x}_j\mid \mathbf{x}'_j,\theta,\phi)
\propto
p(\mathbf{x}'_j\mid \mathbf{x}_j,\theta)p(\mathbf{x}_j\mid \phi)$. Across parameter values, it also updates the posterior over the global parameters through the marginal channel likelihood $m_{\theta,\phi}(\mathbf{x}'_j)$, since $p(\theta,\phi\mid \mathbf{x}'_j,\mathcal D) \propto
m_{\theta,\phi}(\mathbf{x}'_j)p(\theta,\phi\mid \mathcal D)$. This second update distinguishes the exact, or \emph{online}, reactive defense from the offline approximation introduced below.

Attempting to approximate \eqref{eq:op_predictive} directly, for instance by sampling from the full joint posterior $p(\mathbf{x}_j,\theta,\phi\mid \mathbf{x}'_j,\mathcal D)$, is often intractable. First, the joint space of covariates $\mathbf{x}_j$ and parameters $(\theta,\phi)$ is typically high-dimensional, posing a challenge for MCMC. Second, if the adversarial channel $p(\mathbf{x}'_j\mid \mathbf{x}_j,\theta)$ is only accessible through simulation, as with many black-box or optimization-based attacks, the problem becomes one of likelihood-free inference.

The ratio form \eqref{eq:exact_reactive_ratio} makes a second computational bottleneck explicit. Evaluating it requires expectations under the covariate model $p(\mathbf{x}_j\mid\phi)$, evaluations of the channel density $p(\mathbf{x}'_j\mid\mathbf{x}_j,\theta)$, and the marginal channel likelihood $m_{\theta,\phi}(\mathbf{x}'_j)$. Learning and integrating over a generative model for $p(\mathbf{x}\mid\phi)$ is itself difficult, especially with high-dimensional covariates.

A straightforward non-parametric approximation bypasses the need to learn an explicit, high-dimensional generative model $p(\mathbf{x}\mid \phi)$ by replacing it with the empirical distribution of the training inputs. After this replacement, $\phi$ no longer appears in the reactive approximation. If, in addition, the posterior over $\theta$ is represented by samples $\{\theta_s\}_{s=1}^S$ from $p(\theta\mid\mathcal D)$, the exact expression \eqref{eq:exact_reactive_ratio} leads to the following empirical online reactive defense
\begin{equation}\label{onPure}
p_{\mathrm{ON}}(y_j\mid\mathbf{x}'_j,\mathcal D)
\approx
\frac{
\sum_{s=1}^S \sum_{i=1}^N
p(y_j\mid \mathbf{x}_i,\theta_s)
p(\mathbf{x}'_j\mid \mathbf{x}_i,\theta_s)
}{
\sum_{s'=1}^S \sum_{k=1}^N
p(\mathbf{x}'_j\mid \mathbf{x}_k,\theta_{s'})
}.
\end{equation}
To interpret this expression, define the purified prediction associated with sample $\theta_s$ as
\begin{equation*}
\widehat p_s(y_j\mid \mathbf{x}'_j)
=
\frac{
\sum_{i=1}^N
p(y_j\mid \mathbf{x}_i,\theta_s)
p(\mathbf{x}'_j\mid \mathbf{x}_i,\theta_s)
}{
\sum_{k=1}^N
p(\mathbf{x}'_j\mid \mathbf{x}_k,\theta_s)
},
\end{equation*}
and its marginal likelihood for the corrupted test input as
\begin{equation*}
L_s(\mathbf{x}'_j)
=
\sum_{i=1}^N
p(\mathbf{x}'_j\mid \mathbf{x}_i,\theta_s).
\end{equation*}
Then \eqref{onPure} can be rewritten as
\begin{equation*}
p_{\mathrm{ON}}(y_j\mid\mathbf{x}'_j,\mathcal D)
\approx
\sum_{s=1}^S
\frac{
L_s(\mathbf{x}'_j)
}{
\sum_{r=1}^S L_r(\mathbf{x}'_j)
}
\widehat p_s(y_j\mid \mathbf{x}'_j).
\end{equation*}
Thus, the online defense reweights the prediction from each posterior sample by the marginal likelihood that this sample assigns to the observed corrupted input. In this sense, it performs a Bayesian update at test time, giving more influence to parameter values that better explain $\mathbf{x}'_j$ under the assumed adversarial channel.

A simpler approximation is obtained by suppressing this test-time update of the global parameters. Specifically, if the marginal likelihood $m_{\theta,\phi}(\mathbf{x}'_j)$ is approximately constant over the region where $p(\theta,\phi\mid\mathcal D)$ places most of its mass, then
\[
p(\theta,\phi\mid \mathbf{x}'_j,\mathcal D)
\propto
m_{\theta,\phi}(\mathbf{x}'_j)
p(\theta,\phi\mid\mathcal D)
\approx
p(\theta,\phi\mid\mathcal D).
\]
This leads to an \emph{offline} reactive approximation
\begin{equation*}
p_{\mathrm{OFF}}(y_j\mid \mathbf{x}'_j,\mathcal D)
=
\mathbb{E}_{(\theta,\phi)\mid \mathcal D}
\left[
\mathbb{E}_{\mathbf{x}_j\mid \mathbf{x}'_j,\theta,\phi}
\left\{
p(y_j\mid \mathbf{x}_j,\theta)
\right\}
\right],
\end{equation*}
which can be written as
\begin{equation*}
p_{\mathrm{OFF}}(y_j\mid \mathbf{x}'_j,\mathcal D)
=
\mathbb{E}_{(\theta,\phi)\mid \mathcal D}
\left[
\mathbb{E}_{\mathbf{x}_j\mid \phi}
\left\{
\frac{
p(\mathbf{x}'_j\mid \mathbf{x}_j,\theta)
p(y_j\mid \mathbf{x}_j,\theta)
}{
\mathbb{E}_{\tilde{\mathbf{x}}\mid \phi}
[
p(\mathbf{x}'_j\mid \tilde{\mathbf{x}},\theta)
]
}
\right\}
\right].
\end{equation*}
Applying the same empirical approximation to the clean covariate distribution yields
\begin{equation}\label{offPure}
p_{\mathrm{OFF}}(y_j\mid \mathbf{x}'_j,\mathcal D)
\approx
\frac{1}{S}
\sum_{s=1}^S
\sum_{i=1}^N
w_{si}
p(y_j\mid \mathbf{x}_i,\theta_s),
\end{equation}
where
\begin{equation*}
w_{si}
=
\frac{
p(\mathbf{x}'_j\mid \mathbf{x}_i,\theta_s)
}{
\sum_{k=1}^N
p(\mathbf{x}'_j\mid \mathbf{x}_k,\theta_s)
}.
\end{equation*}
The weights $w_{si}$ are normalized over the training inputs for each fixed posterior sample $\theta_s$. Hence, unlike the online defense, the offline defense treats all posterior samples as equally likely after observing $\mathbf{x}'_j$. It only performs purification of the latent input conditional on each parameter sample; it does not reweight the parameter samples themselves. This makes the offline defense less faithful to the exact Bayesian model, but useful as a benchmark and as a diagnostic to separate the effect of input purification from the effect of test-time parameter reweighting.

Both reactive approximations are closely related to adversarial purification. In our formulation, purification corresponds to inferring the full posterior distribution $p(\mathbf{x}_j\mid \mathbf{x}'_j,\theta,\phi)$ over the latent clean input. Most existing purification methods can be viewed as non-Bayesian counterparts of this operation: instead of propagating the full posterior uncertainty, they approximate it with a point mass at a single restored estimate $\widehat{\mathbf{x}}_j$. Such methods are typically \emph{model-agnostic} when the purification rule is independent of the downstream predictive model parameters $\theta$, and \emph{model-guided} when $\theta$ is used to shape the restoration process. Beyond adversarial purification, the reactive formulation also encompasses RS \citep{cohen2019certified}, which emerges as a special case under a simple Gaussian channel and a suitable limiting choice of the latent-input prior, as shown by the RS limit in Section \ref{subsec:at}.

It is important to emphasize the computational implications of these reactive strategies. Both empirical defenses $p_{\mathrm{ON}}$ and $p_{\mathrm{OFF}}$ above require storing the clean training inputs, or an adequate approximation to their distribution, and evaluating the channel density $p(\mathbf{x}'\mid \mathbf{x},\theta)$ at test time. For simple noise channels, this density is available analytically and the weights in \eqref{onPure} and \eqref{offPure} can be computed directly. However, for many realistic attack-based channels, especially those defined through iterative optimization, the channel may be available only as a simulator. In that case, the Bayesian target remains well-defined through the latent-variable model, but the weights in \eqref{onPure}--\eqref{offPure} cannot be evaluated directly. Implementing the reactive defense then requires likelihood-free or simulation-based inference methods \citep{cranmer2020frontier}, ranging from Approximate Bayesian Computation \citep{gallego2024protecting} to more scalable neural posterior-estimation schemes \citep{papamakarios2019sequential}. Developing such methods for high-dimensional adversarial channels is beyond the scope of this work. This limitation motivates the proactive defense introduced next, which avoids test-time density evaluations and only requires sampling from the adversarial~channel~during~training.

\subsection{Protection During Training}\label{sec:protect-train}

We next consider a proactive defense, in which robustness is built into the posterior during training. The key modeling change is to introduce, for each clean training input $\mathbf{x}_i$, a latent adversarially corrupted version $\mathbf{x}'_i$ generated by the channel. The observed label $y_i$ is then assumed to be generated from this corrupted input, as represented in Figure~\ref{fig:train}. Thus, training integrates over attacks that could have affected each observation, whereas prediction later uses the robust posterior directly.

\begin{figure}[h!]
\centering
\begin{tikzpicture}[>=stealth, node distance = 1.0cm and 1.6cm]

\node[latent]                       (phi)   {$\phi$};
\node[latent, right=2.5cm of phi]   (theta) {$\theta$};

\node[obs,    below=1.2cm of phi]   (xtr)   {$\mathbf{x}_i$};
\node[latent, below=0.8cm of xtr]   (xptr)  {$\mathbf{x}'_i$};
\node[obs,    right=1.8cm of xptr]  (ytr)   {$y_i$};

\edge {phi}  {xtr};
\edge {xtr}  {xptr};
\path (xptr)  edge[->, bend left =  0] (ytr);
\path (theta) edge[->, bend right = 0] (ytr);
\path (theta) edge[->, bend right = 0] (xptr);

\plate {train}{(xtr)(xptr)(ytr)} {$i = 1,\dots,N$};
\node[anchor=south west, inner sep=6pt, font=\scriptsize] at (train.south west) {Train};

\node[latent, right=3.5cm of xtr]  (xtest)   {$\mathbf{x}_j$};
\node[obs,    below=0.8cm of xtest] (xptest) {$\mathbf{x}'_j$};
\node[latent, right=1.8cm of xptest] (ytest) {$y_j$};

\path (phi) edge[->, bend left = 0] (xtest);
\edge {xtest} {xptest};
\path (xptest) edge[->, bend left =  0] (ytest);
\path (theta)  edge[->, bend right = 0] (ytest);
\path (theta) edge[->, bend right = 20] (xptest);

\plate {test}{(xtest)(xptest)(ytest)} {$j = 1,\dots,M$};
\node[anchor=south west, inner sep=6pt, font=\scriptsize] at (test.south west) {Test};

\end{tikzpicture}
\caption{PGM \mbox{for proactive defense. Shaded, observed variables. Unshaded, latent variables.}}
\label{fig:train}
\end{figure}

Under Figure~\ref{fig:train}, the training-data joint density factorizes as
\begin{equation}\label{eq:proactive_joint}
\begin{aligned}
&p(\theta,\phi,\mathcal D,\mathbf{x}'_{1:N})  =
p(\theta,\phi)
\prod_{i=1}^N
p(\mathbf{x}_i\mid\phi)
p(\mathbf{x}'_i\mid\mathbf{x}_i,\theta)
p(y_i\mid\mathbf{x}'_i,\theta).
\end{aligned}
\end{equation}
The following result formalizes the resulting robust posterior and the main computational simplification relative to the reactive model.

\begin{proposition}[Proactive posterior distribution]\label{prop:proactive_factorization}
Under the proactive model in Figure~\ref{fig:train}, assume that $p(\theta,\phi)=p(\theta)p(\phi)$ and define the channel-marginalized likelihood contribution
\begin{equation*}
r_\theta(y_i\mid\mathbf{x}_i)
=
\int
p(y_i\mid\mathbf{x}'_i,\theta)
p(\mathbf{x}'_i\mid\mathbf{x}_i,\theta)
\,d\mathbf{x}'_i
=
\mathbb{E}_{\mathbf{x}'_i\mid\mathbf{x}_i,\theta}
\left[
p(y_i\mid\mathbf{x}'_i,\theta)
\right].
\end{equation*}
Then the posterior under the proactive model factorizes as
\begin{equation}\label{eq:proactive_posterior}
p(\theta,\phi\mid\mathcal D)
=
p_{\mathrm{R}}(\theta\mid\mathcal D)\,
p(\phi\mid\mathbf{x}_{1:N}),
\end{equation}
where
\begin{equation}\label{eq:robust_theta_posterior}
p_{\mathrm{R}}(\theta\mid\mathcal D)
\propto
p(\theta)
\prod_{i=1}^N
r_\theta(y_i\mid\mathbf{x}_i),
\qquad
p(\phi\mid\mathbf{x}_{1:N})
\propto
p(\phi)
\prod_{i=1}^N
p(\mathbf{x}_i\mid\phi).
\end{equation}
Moreover, under the no-test-update approximation
$p(\theta\mid\mathbf{x}'_j,\mathcal D)\approx p_{\mathrm{R}}(\theta\mid\mathcal D)$, the proactive PPD for an observed (potentially) corrupted input is
\begin{equation}\label{eq:proactive_ppd}
p_{\mathrm{R}}(y_j\mid\mathbf{x}'_j,\mathcal D)
\approx
\mathbb{E}_{\theta\mid\mathcal D}
\left[
p(y_j\mid\mathbf{x}'_j,\theta)
\right],
\end{equation}
where the expectation is with respect to $p_{\mathrm{R}}(\theta\mid\mathcal D)$.
\end{proposition}

\begin{proof}
Integrating \eqref{eq:proactive_joint} over the latent adversarial inputs gives
\[
p(\mathcal D\mid\theta,\phi)
=
\prod_{i=1}^N
\left[
p(\mathbf{x}_i\mid\phi)
\int
p(y_i\mid\mathbf{x}'_i,\theta)
p(\mathbf{x}'_i\mid\mathbf{x}_i,\theta)
\,d\mathbf{x}'_i
\right]
=
\prod_{i=1}^N
p(\mathbf{x}_i\mid\phi)
r_\theta(y_i\mid\mathbf{x}_i).
\]
Multiplying by the independent prior $p(\theta)p(\phi)$ separates all terms involving $\theta$ from all terms involving $\phi$, proving \eqref{eq:proactive_posterior}--\eqref{eq:robust_theta_posterior} after normalization. For prediction, Figure~\ref{fig:train} implies $y_j\perp\phi\mid(\mathbf{x}'_j,\theta)$, so
\[
p(y_j\mid\mathbf{x}'_j,\mathcal D)
=
\iint
p(y_j\mid\mathbf{x}'_j,\theta)
p(\theta,\phi\mid\mathbf{x}'_j,\mathcal D)
\,d\theta\,d\phi
=
\int
p(y_j\mid\mathbf{x}'_j,\theta)
p(\theta\mid\mathbf{x}'_j,\mathcal D)
\,d\theta.
\]
Using $p(\theta\mid\mathbf{x}'_j,\mathcal D)\approx p_{\mathrm{R}}(\theta\mid\mathcal D)$ gives \eqref{eq:proactive_ppd}.
\end{proof}

Proposition~\ref{prop:proactive_factorization} suggests that proactive training transfers the computational cost of the defense to the offline posterior computation. The test-time expression \eqref{eq:proactive_ppd} has the form of the standard Bayesian PPD, but with the ordinary posterior over $\theta$ replaced by the robust proactive posterior $p_{\mathrm{R}}(\theta\mid\mathcal D)$. The robust predictive model only depends on $\theta$ through this posterior; the covariate model $p(\mathbf{x}\mid\phi)$ factors out of prediction. Consequently, online purification, access to the training data, or generative model for clean covariates are not required at test time. This is the practical advantage of the proactive formulation.

This formulation also generalizes standard AT, recovering it by taking a deterministic point-mass adversarial channel and a point estimate of $\theta$, as Section~\ref{subsec:at} shows. However, the proactive Bayesian model replaces this single worst-case attack with a stochastic channel, thereby encoding uncertainty about the attacker's strategy. As the experiments demonstrate, training against a distribution of attacks can even yield robustness against attack modalities not seen during training.

The remaining difficulty is computational: the robust posterior in \eqref{eq:robust_theta_posterior} contains one channel integral per observation which, in general, is not available in closed form. We therefore approximate $p_{\mathrm{R}}(\theta\mid\mathcal D)$ via VI \citep{blei2017variational}, using a tractable family $q_\psi(\theta)$ parameterized by $\psi$. The exact evidence lower bound (ELBO) for the robust posterior is
\begin{equation}\label{ELBONITA}
\mathcal{L}(\psi)
=
\mathbb{E}_{\theta \sim q_\psi}
\left[
\sum_{i=1}^N
\log
\mathbb{E}_{\mathbf{x}'_i\mid\mathbf{x}_i,\theta}
\left\{
p(y_i\mid\mathbf{x}'_i,\theta)
\right\}
\right]
-
\mathrm{KL}\{q_\psi(\theta)\|p(\theta)\}.
\end{equation}
One can estimate gradients of \eqref{ELBONITA} directly by ratio-of-means Monte Carlo estimators; see SM-4. In the main algorithm, however, we optimize the simpler lower bound, designated surrogate ELBO
\begin{equation}\label{eq:surrogate_elbo}
\tilde{\mathcal{L}}(\psi)
=
\mathbb{E}_{\theta \sim q_\psi}
\left[
\sum_{i=1}^N
\mathbb{E}_{\mathbf{x}'_i\mid\mathbf{x}_i,\theta}
\left\{
\log p(y_i\mid\mathbf{x}'_i,\theta)
\right\}
\right]
-
\mathrm{KL}\{q_\psi(\theta)\|p(\theta)\}.
\end{equation}
$\tilde{\mathcal{L}}(\psi)\le \mathcal{L}(\psi)$ follows by applying Jensen's inequality to each
$\log \mathbb{E}_{\mathbf{x}'_i\mid\mathbf{x}_i,\theta}\{p(y_i\mid\mathbf{x}'_i,\theta)\}$ term.
The bound introduces a gap that reflects the variability of the likelihood over the adversarial channel. Tighter objectives, including importance-weighted bounds that draw several channel samples per observation \citep{burda2015importance}, are possible and are evaluated in SM-3. Yet, empirically, the single-sample surrogate seems to achieve comparable robustness at substantially lower cost, suggesting that gradient variance is the dominant computational concern in our setting.
Note that \eqref{eq:surrogate_elbo} is the variational form of a generalized Bayes posterior \citep{knoblauch2022optimization}: each observation contributes a channel-averaged loss, and the unconstrained optimizer is the corresponding Gibbs posterior.

\begin{proposition}[Unconstrained solution of the surrogate ELBO]\label{prop:generalized_posterior}
Let $\Theta$ be the parameter space, and $\mathcal{Q}$ be the set of probability densities on $\Theta$ that are absolutely continuous with respect to the prior $p(\theta)$. For $i=1,\ldots,N$, define $\bar\ell_i(\theta)
=
-
\mathbb{E}_{\mathbf{x}'_i\mid\mathbf{x}_i,\theta}
\left\{
\log p(y_i\mid\mathbf{x}'_i,\theta)
\right\}$ and let $\bar L_{\mathcal D}(\theta)=\sum_{i=1}^N\bar\ell_i(\theta)$. Assume that
$ Z(\mathcal D) = \int_\Theta p(\theta) \exp\{-\bar L_{\mathcal D}(\theta)\} \,d\theta$
satisfies $0<Z(\mathcal D)<\infty$. Then, for any $q\in\mathcal Q$,
\begin{equation}\label{eq:surrogate_kl_identity}
\tilde{\mathcal L}(q)
=
\log Z(\mathcal D)
-
\mathrm{KL}\{q(\theta)\|q^*(\theta\mid\mathcal D)\},
\end{equation}
where
\begin{equation}\label{eq:generalized_posterior}
q^*(\theta\mid\mathcal D)
=
\frac{
p(\theta)\exp\left\{-\sum_{i=1}^N\bar\ell_i(\theta)\right\}
}{
Z(\mathcal D)
}.
\end{equation}
Hence $\tilde{\mathcal L}(q)\le \log Z(\mathcal D)$, with equality if and only if $q=q^*$ almost everywhere. Consequently, the unconstrained maximizer of the surrogate ELBO over $\mathcal Q$ is the Gibbs posterior $q^*(\theta\mid\mathcal D)$.
\end{proposition}

\begin{proof}
For any $q\in\mathcal Q$, the surrogate objective can be written as
\[
\tilde{\mathcal L}(q)
=
\int_\Theta q(\theta)\{-\bar L_{\mathcal D}(\theta)\}\,d\theta
-
\int_\Theta q(\theta)
\log\frac{q(\theta)}{p(\theta)}
\,d\theta.
\]
From \eqref{eq:generalized_posterior},
\[
-
\bar L_{\mathcal D}(\theta)
=
\log q^*(\theta\mid\mathcal D)
-
\log p(\theta)
+
\log Z(\mathcal D).
\]
Substituting this expression into the previous and canceling the $\log p(\theta)$ terms gives
\[
\tilde{\mathcal L}(q)
=
\log Z(\mathcal D)
-
\int_\Theta
q(\theta)
\log\frac{q(\theta)}{q^*(\theta\mid\mathcal D)}
\,d\theta,
\]
which proves \eqref{eq:surrogate_kl_identity}. The bound and uniqueness statement follow from Gibbs' inequality.
\end{proof}

The surrogate objective \eqref{eq:surrogate_elbo} is a double expectation and is, therefore, amenable to stochastic optimization. Operationally, this is a black-box stochastic VI procedure \citep{ranganath2014black,blei2017variational}. If the variational posterior is reparameterizable, $\theta=f_\psi(\boldsymbol{\epsilon})$ with $\boldsymbol{\epsilon}\sim p(\boldsymbol{\epsilon})$, and the adversarial channel is also reparameterizable\footnote{This condition is met by the attack-based channels we consider, since they basically add reparameterizable noise to a deterministic attack output.}, $\mathbf{x}'=h(\mathbf{x},\theta,\boldsymbol{\eta})$ with $\boldsymbol{\eta}\sim p(\boldsymbol{\eta})$, then the pathwise gradient of the variational objective is:
\begin{equation*}
\nabla_\psi \tilde{\mathcal{L}}(\psi)
=
\mathbb{E}_{\boldsymbol{\epsilon}}
\left[
\sum_{i=1}^N
\mathbb{E}_{\boldsymbol{\eta}}
\left\{
\nabla_\psi
\log p\left(
y_i
\mid
h(\mathbf{x}_i,f_\psi(\boldsymbol{\epsilon}),\boldsymbol{\eta}),
f_\psi(\boldsymbol{\epsilon})
\right)
\right\}
\right]
-
\nabla_\psi
\mathrm{KL}\{q_\psi(\theta)\|p(\theta)\}.
\end{equation*}
The first term can be estimated unbiasedly by Monte Carlo, while the KL term is available in closed form for common variational families. This yields an efficient procedure for learning the robust posterior in \eqref{eq:proactive_ppd}.

\subsection{Recovering Previous Defenses}\label{subsec:at}

Both Bayesian constructions above recover several standard defenses as limiting cases. The common pattern is that a classical method is obtained by collapsing one or more Bayesian ingredients: replacing the posterior by a point estimate, replacing the adversarial channel by a deterministic attack, or replacing posterior uncertainty over latent clean inputs by a fixed smoothing distribution. We make these connections explicit below. Write $\ell_\theta(\mathbf{x},y) = - \log p(y\mid\mathbf{x},\theta)$
for the negative log-likelihood, and let $\mathcal B_\epsilon(\mathbf{x})$ denote the allowed perturbation set around $\mathbf{x}$.
To match supervised training attacks, the deterministic channels in this subsection may depend on the observed label $y_i$, which is available during~training.

\begin{proposition}[AT as a maximum a posteriori (MAP) limit]\label{prop:at_map_limit}
Assume that, for each $(i,\theta)$, the loss
$\ell_\theta(\mathbf{z},y_i)=-\log p(y_i\mid\mathbf{z},\theta)$
is well-defined on $\mathcal B_\epsilon(\mathbf{x}_i)$ and that there exists a measurable selection
\(
\mathbf{x}_i^*(\theta)
\in
\argmax_{\mathbf{z}\in\mathcal B_\epsilon(\mathbf{x}_i)}
\ell_\theta(\mathbf{z},y_i).
\)
Suppose that the proactive model is used with a deterministic training channel, interpreted as the Markov kernel $ p(d\mathbf{x}'_i\mid\mathbf{x}_i,\theta)
=
\delta_{\mathbf{x}_i^*(\theta)}(d\mathbf{x}'_i)$, 
and a flat prior on $\theta$. Then any MAP of the robust posterior $p_{\mathrm R}(\theta\mid\mathcal D)$ in \eqref{eq:robust_theta_posterior} solves the standard adversarial-training problem
\begin{equation}\label{eq:at_objective}
\theta_{\mathrm{AT}}
\in
\argmin_\theta
\sum_{i=1}^N
\max_{\mathbf{z}\in\mathcal B_\epsilon(\mathbf{x}_i)}
\ell_\theta(\mathbf{z},y_i).
\end{equation}
\end{proposition}

\begin{proof}
Under the deterministic channel,
\[
r_\theta(y_i\mid\mathbf{x}_i)
=
\int
p(y_i\mid\mathbf{x}'_i,\theta)
\delta_{\mathbf{x}_i^*(\theta)}(d\mathbf{x}'_i)
=
p(y_i\mid\mathbf{x}_i^*(\theta),\theta).
\]
After dropping the prior term, maximizing the objective associated with \eqref{eq:robust_theta_posterior} is equivalent to maximizing
$\sum_i \log p(y_i\mid\mathbf{x}_i^*(\theta),\theta)$, or minimizing
$\sum_i \ell_\theta(\mathbf{x}_i^*(\theta),y_i)$. By definition of $\mathbf{x}_i^*(\theta)$, this is \eqref{eq:at_objective}.
\end{proof}

Thus, AT is recovered by two simultaneous degeneracies: the adversarial channel places all probability mass on a single worst-case perturbation, and posterior inference is replaced by a MAP estimate. The proactive Bayesian defense relaxes both degeneracies by allowing a stochastic channel and retaining posterior uncertainty over $\theta$.

As an intermediate case, relaxing only the MAP degeneracy (i.e., retaining posterior uncertainty while keeping the deterministic worst-case channel) recovers an adversarially robust Gibbs posterior recently proposed by \citet{sabanayagam2025generalization}. Specifically, substituting the deterministic channel $p(d\mathbf{x}'_i\mid\mathbf{x}_i,\theta) = \delta_{\mathbf{x}_i^*(\theta)}(d\mathbf{x}'_i)$ directly into Proposition~\ref{prop:generalized_posterior} yields
\begin{equation*}
p_{\mathrm{R}}(\theta\mid\mathcal D)
\propto
p(\theta)
\prod_{i=1}^N
r_\theta(y_i\mid\mathbf{x}_i) = p(\theta)
\exp\left\{-\sum_{i=1}^N \ell_\theta(\mathbf{x}_i^*, y_i) \right\},
\end{equation*}
which precisely mirrors their robust posterior formulation.

Other modern defenses can be recovered as variations of our framework. While standard AT relies entirely on the likelihood under attack, many recent defenses introduce auxiliary regularization terms to enforce smoother decision boundaries. In a Bayesian paradigm, regularization naturally corresponds to encoding structural prior beliefs about the model's behavior \citep{bishop2006pattern}. However, adversarial robustness is fundamentally a \textit{local} property, dictating that the model's output should remain invariant specifically within the neighborhoods of the training data $\mathcal{D}$. A traditional prior $p(\theta)$ cannot capture this geometry, as it should be strictly independent of the observed data. To rigorously connect our framework with these modern defenses, we consider the generalized proactive posterior \citep{knoblauch2022optimization} extending \eqref{eq:proactive_posterior}. This generalized posterior encodes the belief in local adversarial smoothness via a data-dependent penalty term $\Omega(\theta,\mathcal D)\ge 0$ yielding
\begin{equation}\label{eq:penalized_generalized_posterior}
p_{\Omega}(\theta\mid\mathcal D)
\propto
p(\theta)
\exp\left\{
-
\sum_{i=1}^N
\mathbb{E}_{\mathbf{x}'_i\mid\mathbf{x}_i,\theta}
\left[
\ell_\theta(\mathbf{x}'_i,y_i)
\right]
-
\Omega(\theta,\mathcal D)
\right\}.
\end{equation}
The MAP estimate of this generalized posterior is obtained by 
\begin{equation*}
\argmin_\theta
\left\{
\sum_{i=1}^N
\mathbb{E}_{\mathbf{x}'_i\mid\mathbf{x}_i,\theta}
\left[
\ell_\theta(\mathbf{x}'_i,y_i)
\right]
+
\Omega(\theta,\mathcal D)
-
\log p(\theta)
\right\}.
\end{equation*}
This gives a formal generalized-Bayes interpretation of smoothness-regularized adversarial defenses. For example, to recover the usual ALP or TRADES training objectives, we use the same deterministic adversarial channel as in Proposition~\ref{prop:at_map_limit}, so that the likelihood term gives the AT loss, and then add the corresponding method-specific penalty. Indeed, let $g_\theta(\mathbf{x})$ denote the pre-softmax logit vector, and $a_\theta(\mathbf{x}_i,y_i)$ be the deterministic attack used by the channel. The ALP penalty \citep{kannan2018adversarial} is obtained by taking
\begin{equation*}
\Omega_{\mathrm{ALP}}(\theta,\mathcal D)
=
\lambda
\sum_{i=1}^N
\left\|
g_\theta(\mathbf{x}_i)
-
g_\theta(a_\theta(\mathbf{x}_i,y_i))
\right\|_2^2.
\end{equation*}
Similarly, if $p_\theta(\cdot\mid\mathbf{x})$ denotes the predictive categorical distribution, the TRADES objective \citep{zhang2019theoretically} is recovered by imposing a KL divergence penalty
\begin{equation*}
\Omega_{\mathrm{TRADES}}(\theta,\mathcal D)
=
\beta
\sum_{i=1}^N
\mathrm{KL}
\left\{
p_\theta(\cdot\mid\mathbf{x}_i)
\,
\middle\|
\,
p_\theta(\cdot\mid a_\theta(\mathbf{x}_i,y_i))
\right\}.
\end{equation*}

Within our framework, these methods are thus understood as generalized Bayesian inferences that impose an inductive bias that the predictive surface should vary smoothly within adversarial neighborhoods.

The preceding examples concern proactive training objectives. The same framework also recovers inference-time defenses. In particular, RS arises as a reactive limit in which the model parameters are fixed, the latent clean-input prior is locally flat, and the adversarial channel is Gaussian. 
For clarity, recall that RS constructs a smoothed classifier from a base classifier $f$ by averaging its decisions under isotropic Gaussian perturbations. Its decision rule can be written as
$
g(\mathbf{x}'_j)
=
\argmax_c
\mathbb{P}_{\boldsymbol{\delta}\sim\mathcal N(0,\sigma^2 I)}
\left\{
f(\mathbf{x}'_j+\boldsymbol{\delta})=c
\right\}.
$
Thus, RS may be read as replacing a single prediction at $\mathbf{x}'_j$ by the predictive distribution induced by a Gaussian neighborhood of $\mathbf{x}'_j$. In our notation, the same averaging arises endogenously from Bayesian inversion of the adversarial channel: the Gaussian smoothing distribution is the posterior distribution of the latent clean input after the parameter posterior and the covariate prior have been collapsed.

\begin{proposition}[RS as a reactive limit]\label{prop:rs_limit}
Consider the reactive model in Section~\ref{operations}, but replace the posterior over $(\theta,\phi)$ by a point estimate $(\theta_{\mathrm{MAP}},\phi_{\mathrm{MAP}})$. Suppose further that $p(\mathbf{x}\mid\phi_{\mathrm{MAP}})\propto 1$, and that the adversarial channel is isotropic Gaussian, $p(\mathbf{x}'\mid\mathbf{x},\theta_{\mathrm{MAP}}) = \mathcal N(\mathbf{x}';\mathbf{x},\sigma^2 I)$. Then the reactive PPD in \eqref{eq:exact_reactive_nested} becomes
\begin{equation}\label{eq:rs_predictive}
p(y_j\mid\mathbf{x}'_j,\mathcal D)
\approx
\mathbb{E}_{\mathbf{x}\sim\mathcal N(\mathbf{x}'_j,\sigma^2 I)}
\left[
p(y_j\mid\mathbf{x},\theta_{\mathrm{MAP}})
\right],
\end{equation}
which is the RS predictive distribution \citep{cohen2019certified}.
For classification, returning the largest component of \eqref{eq:rs_predictive} gives the smoothed decision rule
\[
g(\mathbf{x}'_j)
=
\argmax_c
\mathbb{E}_{\mathbf{x}\sim\mathcal N(\mathbf{x}'_j,\sigma^2 I)}
\left[
p(c\mid\mathbf{x},\theta_{\mathrm{MAP}})
\right].
\]
When the base classifier is deterministic, $p(c\mid\mathbf{x},\theta_{\mathrm{MAP}})$ reduces to the indicator of $f(\mathbf{x})=c$, recovering the standard RS expression above.

\end{proposition}

\begin{proof}
With fixed $(\theta_{\mathrm{MAP}},\phi_{\mathrm{MAP}})$, \eqref{eq:exact_reactive_nested} reduces to an expectation over the latent clean input. Bayes' rule gives
\(
p(\mathbf{x}\mid\mathbf{x}'_j,\theta_{\mathrm{MAP}},\phi_{\mathrm{MAP}})
\propto
\mathcal N(\mathbf{x}'_j;\mathbf{x},\sigma^2I)
p(\mathbf{x}\mid\phi_{\mathrm{MAP}}).
\)
Under the locally flat prior, this posterior is $\mathcal N(\mathbf{x};\mathbf{x}'_j,\sigma^2I)$. This follows from the symmetry of the Gaussian kernel in $(\mathbf{x},\mathbf{x}'_j)$: once the locally flat prior contributes no additional weighting, the normalized kernel in $\mathbf{x}$ is centered at the observed corrupted input $\mathbf{x}'_j$. Substituting this distribution into the inner expectation of \eqref{eq:exact_reactive_nested} yields \eqref{eq:rs_predictive}.
\end{proof}

\section{Experiments}

We conduct a series of experiments to empirically demonstrate the advantages of our Bayesian defense framework. For computational reasons outlined in Section \ref{operations}, our evaluation centers on  proactive defenses, providing a comprehensive adversarial robustness analysis against strong baselines. Note though that we also offer a conceptual validation of the reactive approach. 

\subsection{Experimental Setup}

We evaluate our framework across both classification and regression tasks. For image classification, we use the MNIST \citep{MNIST} and CIFAR-10 \citep{krizhevsky2009learning} datasets, whereas for regression we consider the Wine \citep{wine_data} and Energy Efficiency \citep{athanasios_tsanas_energy_2012} datasets. The underlying predictive model is always a Bayesian NN (BNN) trained with VI. For classification, we employ a convolutional architecture, whereas for regression we use a fully connected one. Code to reproduce the experiments as well as full hyperparameter specification can be found at \url{https://anonymous.4open.science/r/advDef}.

Robustness is assessed using several attacks bounded in $L_2$ norm: single-step PGD (PGD1), multi-step PGD (50 iterations), Entropy-based PGD (ENT), and PGD$^+$ \citep{feng2024attacking}, which undertakes 25 iterations of PGD and 25 iterations of entropy-based PGD. We report deterministic metrics (Accuracy/RMSE) to evaluate point-prediction performance, alongside Negative Log-Likelihood (NLL) to capture predictive uncertainty and calibration. Tables report means with standard deviations over three test sets, with best values highlighted in bold. Further details are provided in SM-6 and the repository. Experiments utilized three NVIDIA A100 GPUs.

In our experiments, we instantiate the adversarial channels from our framework into several proactive training regimes. First, we explore attack-based channels built from a one-step PGD attack made probabilistic by perturbing the output with Gaussian noise. This yields two models: OS, with adversarial channel defined purely by this probabilistic attack, and OS50, with channel defined as a two-component mixture distribution with equal probability mass on an identity channel (no attack) and the probabilistic attack channel. In practice, this is achieved by training on minibatches comprising 50\% clean and 50\% attacked data. Second, our MIX model implements a broader mixture channel, which combines several different probabilistic attack types (PGD, PGD$^+$, CW, etc.) and attack parameters to simulate a more diverse adversary. Third, we instantiate a learned generative channel in our NN and NN50 models, where a separate neural network generates the attack distribution. NN defines the channel solely through these generated attacks, whereas NN50 formalizes the channel as an equally weighted mixture of the learned generative attack and the clean identity channel. We benchmark these defenses against an undefended BNN trained on clean data (BL) and a conventional AT implementation (AT), which also employs an empirical 50/50 mix of clean and deterministic one-step attacked inputs. Finally, we assess our two reactive defenses, the offline (offPure) and the online adaptive (onPure) models, corresponding to equations \eqref{offPure} and \eqref{onPure}, respectively.


\subsection{Case Study: Classification}


\paragraph*{Evaluating Defenses} 
We evaluate our proposed methodology against strong white-box (i.e., complete model knowledge) attacks on MNIST. Figures~\ref{fig:SEP_pgd} and~\ref{fig:SEP_pgd_plus} present performance metrics against attack intensity for PGD and PGD$^+$, respectively.

\begin{figure}[h]
    \centering
    \includegraphics[width=.65\linewidth]{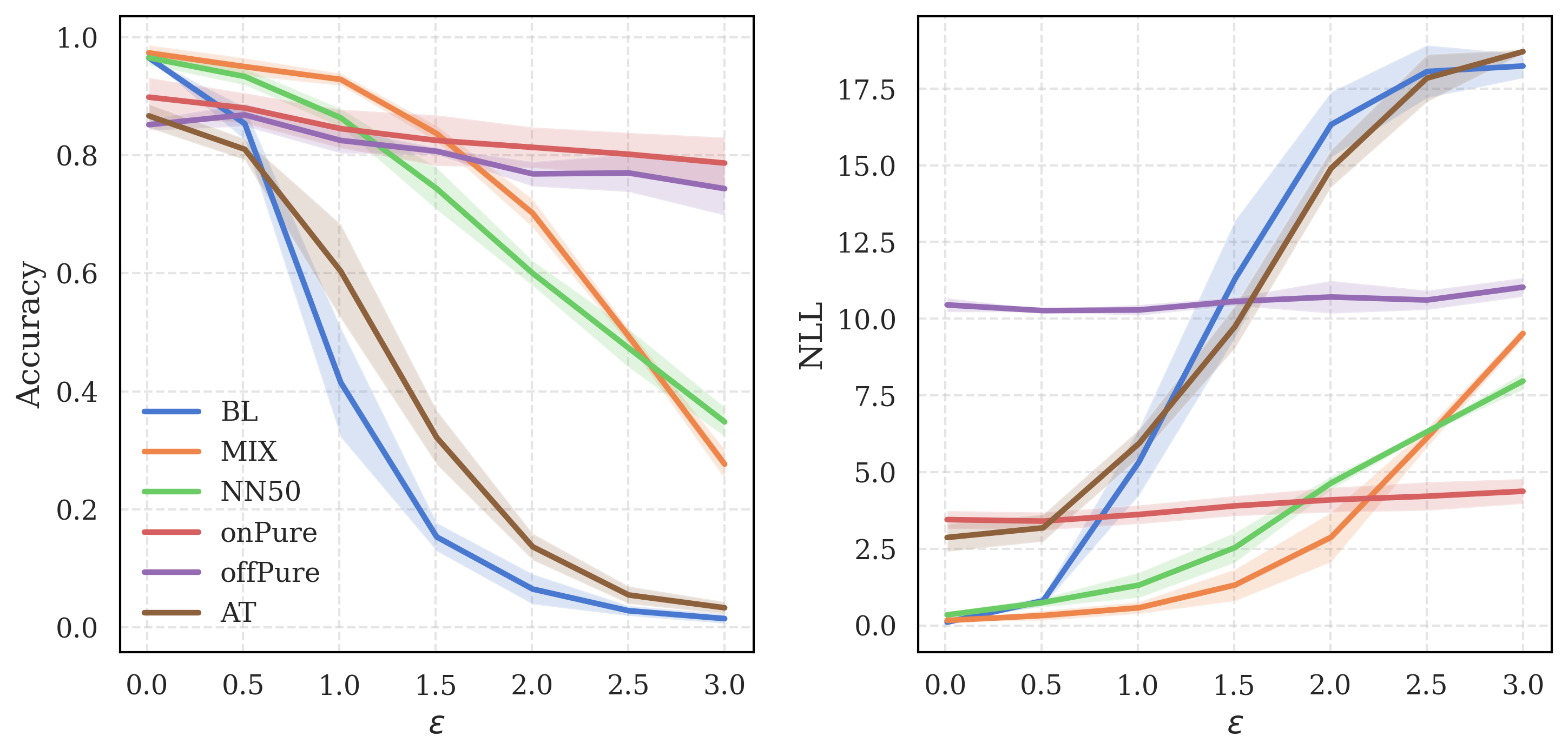}
    \caption{ Performance on MNIST against PGD attack across varying perturbation strengths ($\epsilon$). Left: Accuracy. Right: NLL . Shaded bands indicate $\pm 1$ standard deviation.}
    \label{fig:SEP_pgd}
\end{figure}

\begin{figure}[h]
    \centering
    \includegraphics[width=.65\linewidth]{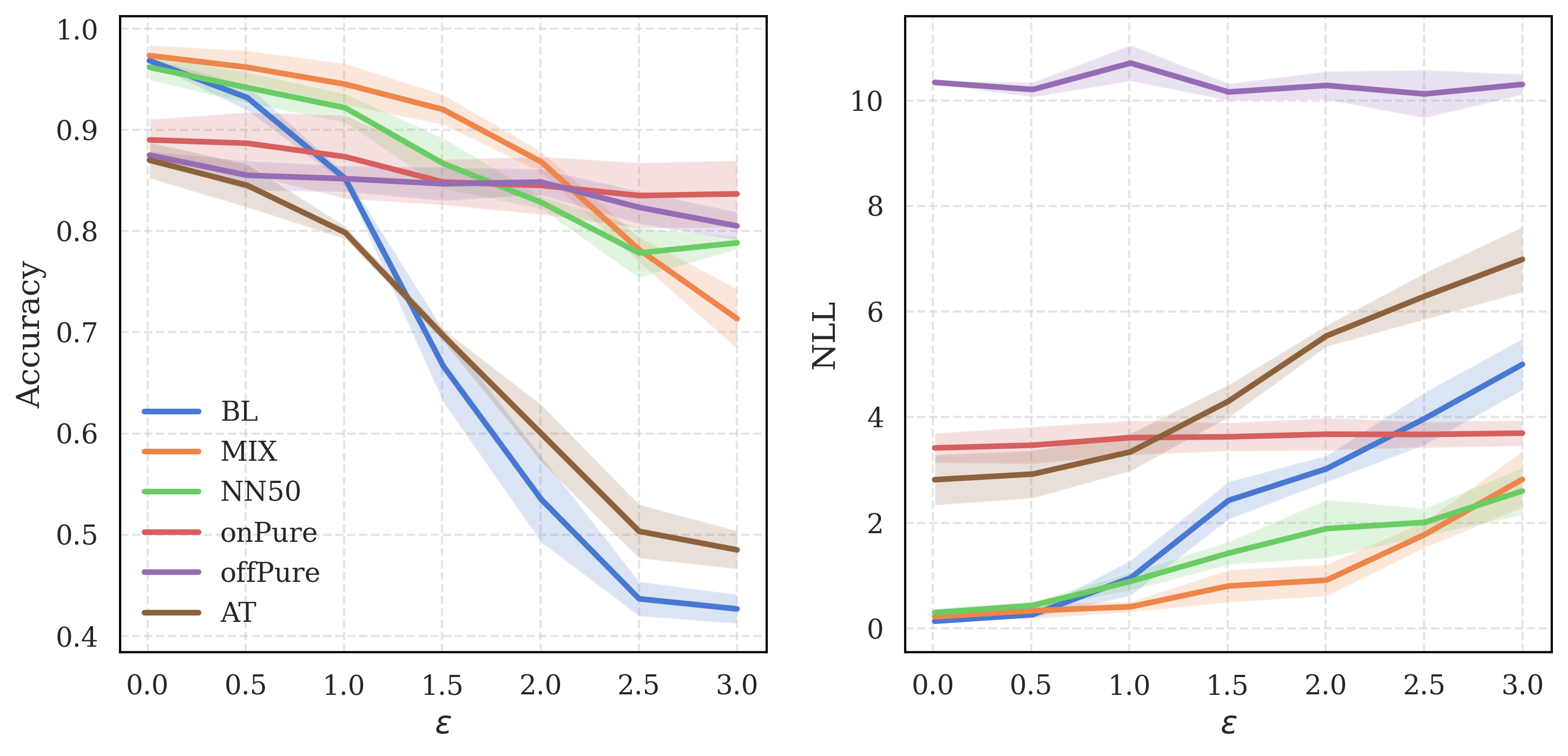}
    \caption{ Performance on MNIST against PGD+ attack across varying perturbation strengths ($\epsilon$). Left: Accuracy . Right: NLL. Shaded bands indicate $\pm 1$ standard deviation.}
    \label{fig:SEP_pgd_plus}
\end{figure}

The undefended standard Bayesian baseline (BL) exhibits rapid degradation under attack. Standard AT provides moderate robustness improvements but suffers from poor clean accuracy while maintaining elevated NLL values, indicating limited benefits.
Our purification methods reveal distinct trade-offs compared to AT. offPure 
consistently exhibits very high NLL values across all perturbation strengths, whereas onPure limits this and achieves substantially lower NLLs. Both approaches degrade clean accuracy due to oversmoothing the predictive distribution. This phenomenon is explained by the RS limit in Section \ref{subsec:at}: oversimplified purification can behave like Gaussian smoothing around the observed input, improving stability under perturbation but potentially blurring class information on clean examples.

In contrast, MIX and NN50 retain clean accuracy while achieving competitive adversarial robustness. MIX benefits from exposure to diverse adversaries during training but requires prior specification of the attack space. Most notably, NN50 (trained exclusively against a learned NN adversary without assuming any fixed attack strategy) demonstrates consistently strong performance. This specification-free approach proves highly effective, suggesting that learned adversarial methods can achieve defenses similar to explicitly specified ones.

\paragraph*{Downstream Task: Selective Accuracy}
To assess the robustness of our defenses against attacks targeting entropy-based uncertainty quantification, we evaluate performance on a selective prediction task. This experiment simulates an uncertainty-based filtering approach where the predictive entropy of the trained BNN serves as an out-of-distribution detector.
We construct a balanced test set of MNIST and FashionMNIST \citep{xiao2017online} samples. MNIST samples are attacked with PGD$^+$ to simultaneously induce misclassification and increase entropy. FashionMNIST samples are perturbed with an entropy attack to decrease entropy and evade detection. We measure accuracy on the retained samples after filtering out the half with highest uncertainty scores.
Figure \ref{fig:sel_acc} presents results across varying attack strengths. The findings largely mirror our previous observations, with two notable distinctions. First, both purification-based models exhibit reduced clean accuracy, though onPure demonstrates a marginal advantage. This performance degradation can be attributed to the oversimplifications made in the reactive approach. 
Second, although reactive approaches previously showed advantages over NN50 and MIX under stronger attacks, the latter achieve superior performance in terms of selective accuracy for all intensities. 

\begin{figure}[h]
    \centering
    \includegraphics[width=.325\linewidth]{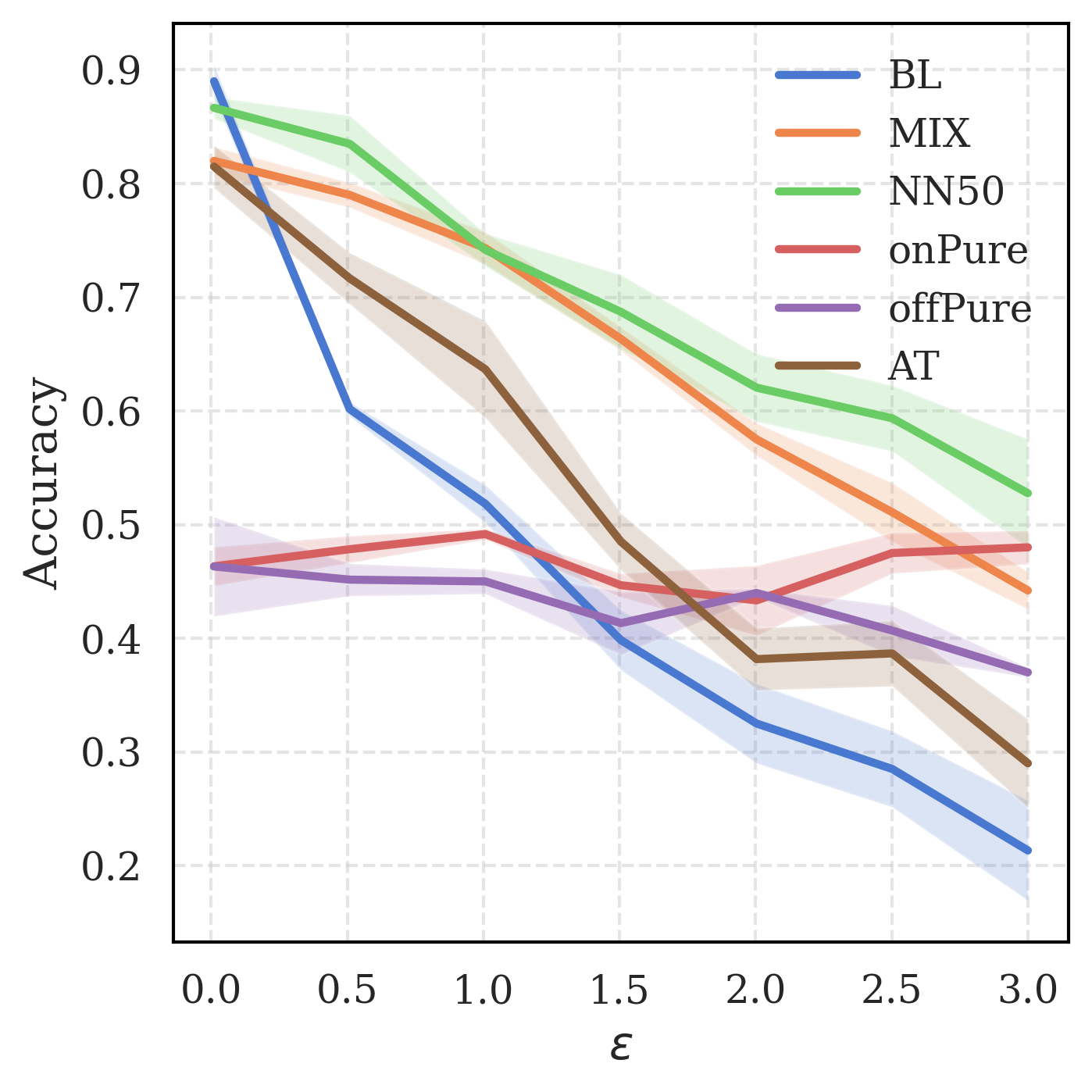}
    \caption{Selective accuracy on balanced mixture of MNIST and FashionMNIST across varying attack strengths ($\epsilon$). Samples with lowest predictive uncertainty (bottom 50\%)~are~kept. }
    \label{fig:sel_acc}
\end{figure}

\begin{table}[h]
\centering
\caption{ Classification accuracy  on MNIST under different attack strategies with a fixed perturbation budget of $\epsilon=2.0$.}
\label{tab:abl_acc}
\begin{tabular}{lccccc}
\hline
\textbf{Model} & \textbf{Clean} & \textbf{PGD1} & \textbf{PGD} & \textbf{PGD$^+$} & \textbf{ENT} \\
\hline
BL     & 0.96 (0.00) & 0.32 (0.03) & 0.06 (0.03) & 0.53 (0.04) & 0.89 (0.00) \\
OS     & 0.84 (0.01) & 0.72 (0.02) & 0.58 (0.01) & 0.73 (0.01) & 0.83 (0.02) \\
OS50   & \textbf{0.96 (0.01)} & 0.81 (0.02) & 0.67 (0.06) & \textbf{0.84 (0.03)} & \textbf{0.93 (0.02)} \\
MIX    & \textbf{0.97 (0.01)} & \textbf{0.86 (0.03)} & 0.70 (0.02) & \textbf{0.87 (0.01)} & \textbf{0.94 (0.02)} \\
NN     & 0.59 (0.02) & 0.54 (0.03) & 0.47 (0.01) & 0.55 (0.03) & 0.58 (0.01) \\
NN50   & \textbf{0.96 (0.01)} & 0.81 (0.01) & 0.60 (0.02) & 0.83 (0.01) & \textbf{0.93 (0.01)} \\
onPure & 0.89 (0.03) & \textbf{0.83 (0.04)} & \textbf{0.81 (0.03)} & \textbf{0.84 (0.03)} & \textbf{0.88 (0.05)} \\
offPure& 0.87 (0.01) & 0.80 (0.01) & 0.77 (0.02) & 0.85 (0.01) & 0.86 (0.02) \\
AT     & 0.87 (0.02) & 0.44 (0.03) & 0.14 (0.02) & 0.60 (0.03) & 0.79 (0.02) \\
\hline
\end{tabular}
\end{table}

\begin{table}[h]
\centering
\caption{ NLL  on MNIST under different attacks with $\epsilon=2.0$.}
\label{tab:abl_nll}
\begin{tabular}{lccccc}
\hline
\textbf{Model} & \textbf{Clean} & \textbf{PGD1} & \textbf{PGD} & \textbf{PGD$^+$} & \textbf{ENT} \\
\hline
BL     & \textbf{0.12 (0.04)} & 6.95 (0.74) & 16.33 (1.06) & 3.02 (0.25) & \textbf{0.41 (0.10)} \\
OS     & 2.90 (0.26) & 3.89 (0.23) & 5.62 (0.45) & 3.91 (0.13) & 3.09 (0.33) \\
OS50   & 0.37 (0.12) & \textbf{1.70 (0.29)} & \textbf{3.56 (0.97)} & 1.65 (0.43) & 0.81 (0.09) \\
MIX    & 0.22 (0.09) & \textbf{1.25 (0.19)} & \textbf{2.88 (0.81)} & \textbf{0.91 (0.30)} & \textbf{0.43 (0.15)} \\
NN     & 9.24 (0.34) & 9.83 (0.56) & 10.70 (0.54) & 9.72 (0.52) & 9.37 (0.50) \\
NN50   & 0.38 (0.06) & 1.91 (0.30) & 4.64 (0.22) & 1.89 (0.55) & 0.81 (0.09) \\
onPure & 3.46 (0.34) & 3.79 (0.31) & 4.10 (0.40) & 3.68 (0.31) & 3.38 (0.29) \\
offPure& 10.18 (0.24) & 10.78 (0.28) & 10.71 (0.53) & 10.28 (0.27) & 9.69 (0.16) \\
AT     & 2.82 (0.47) & 7.68 (0.34) & 14.90 (0.61) & 5.54 (0.20) & 3.24 (0.56) \\
\hline
\end{tabular}
\end{table}

\paragraph*{Ablation Study} 

Finally, we conduct an ablation study to isolate the sources of robustness in our proactive defense, investigating two components: the inclusion of the identity channel as a mixture component of the channel (i.e., empirical balanced training) and the stochasticity of the adversarial channel. Tables \ref{tab:abl_acc} and \ref{tab:abl_nll} reveal their distinct impact. Models whose channels consist exclusively of adversarial attacks (OS, NN) suffer significant performance degradation on clean data. In contrast, the mixture approaches (OS50, NN50) that place equal probability mass on the identity channel achieve strong clean accuracy ($>$0.96) and well-calibrated predictions while maintaining competitive robustness. The benefit of the stochastic adversarial channel is even more pronounced: the deterministic AT baseline, despite employing the exact same 50/50 empirical mixture, is significantly outperformed by its stochastic equivalent, OS50, across all metrics. This demonstrates that our probabilistic formulation enables more effective learning from adversarial examples than simply mixing clean and attacked data alone can provide. The benefit of adversarial diversity is further validated by MIX and NN50.


\begin{table*}[!tb]
\centering
\caption{ RMSE  for regression tasks on Wine and Energy datasets at $\epsilon=2.0$.}
\label{tab:reg_rmse_wine_energy}
\setlength{\tabcolsep}{4pt}
\resizebox{\textwidth}{!}{
\begin{tabular}{l|ccccc|ccccc}
\hline
\multicolumn{1}{c}{\textbf{Model}}  & \multicolumn{5}{c}{\textbf{Wine}} & \multicolumn{5}{c}{\textbf{Energy}} \\
\hline
& \textbf{Clean} & \textbf{PGD1} & \textbf{PGD} & \textbf{PGD$^+$} & \textbf{ENT} 
& \textbf{Clean} & \textbf{PGD1} & \textbf{PGD} & \textbf{PGD$^+$} & \textbf{ENT} \\
\hline
BL      & \textbf{0.75 (0.04)} & 15.32 (0.05) & 15.19 (0.00) & 22.47 (0.14) & 15.63 (0.03) 
        & 2.32 (1.34) & 8.14 (0.15) & 8.55 (0.21) & 9.73 (0.12) & 8.11 (0.36) \\
MIX     & \textbf{0.74 (0.02)} & 1.11 (0.00) & 1.09 (0.01) & 1.53 (0.01) & 1.12 (0.03) 
        & 2.27 (1.01) & \textbf{2.95 (0.18)} & \textbf{2.93 (0.10)} & 4.69 (0.21) & \textbf{2.95 (0.29)} \\
NN50    & \textbf{0.79 (0.06)} & \textbf{0.93 (0.06)} & \textbf{0.90 (0.06)} & \textbf{0.96 (0.06)} & \textbf{0.89 (0.06)} 
        & 2.25 (0.79) & 3.86 (0.16) & 3.71 (0.16) & 5.91 (0.31) & 3.86 (0.17) \\
onPure  & 1.11 (0.03) & 1.07 (0.02) & 1.11 (0.01) & 1.08 (0.03) & 1.08 (0.01) 
        & 2.96 (0.13) & \textbf{3.03 (0.30)} & \textbf{3.19 (0.38)} & \textbf{3.17 (0.12)} & \textbf{3.10 (0.24)} \\
offPure & 1.13 (0.06) & 1.13 (0.08) & 1.15 (0.04) & 1.14 (0.02) & 1.10 (0.03) 
        & 2.85 (0.21) & \textbf{2.88 (0.26)} & \textbf{2.98 (0.38)} & \textbf{3.18 (0.15)} & \textbf{3.13 (0.33)} \\
AT      & \textbf{0.76 (0.02)} & 6.75 (0.03) & 6.46 (0.09) & 8.60 (0.05) & 6.70 (0.02) 
        & 2.41 (1.09) & 6.25 (0.34) & 6.26 (0.24) & 8.59 (0.36) & 6.26 (0.49) \\
\hline
\end{tabular}
}
\end{table*}

\begin{table*}[!tb]
\centering
\caption{ NLL  for regression tasks on Wine and Energy datasets at $\epsilon=2.0$. }
\label{tab:reg_nll_wine_energy}
\setlength{\tabcolsep}{4pt}
\resizebox{\textwidth}{!}{
\begin{tabular}{l|ccccc|ccccc}
\hline
\multicolumn{1}{c}{\textbf{Model}}  & \multicolumn{5}{c}{\textbf{Wine}} & \multicolumn{5}{c}{\textbf{Energy}} \\
\hline
& \textbf{Clean} & \textbf{PGD1} & \textbf{PGD} & \textbf{PGD$^+$} & \textbf{ENT} 
& \textbf{Clean} & \textbf{PGD1} & \textbf{PGD} & \textbf{PGD$^+$} & \textbf{ENT} \\
\hline
BL      & \textbf{1.16 (0.06)} & 173.51 (1.97) & 149.05 (0.95) & 316.88 (3.57) & 185.29 (1.23) 
        & \textbf{0.64 (0.03)} & 172.29 (4.56) & 160.89 (7.53) & 221.56 (0.98) & 168.03 (4.11) \\
MIX     & \textbf{1.12 (0.02)} & 1.72 (0.03) & 1.77 (0.07) & 2.56 (0.08) & 1.69 (0.04) 
        & 1.50 (0.07) & \textbf{3.47 (0.28)} & \textbf{3.33 (0.15)} & 6.78 (0.68) & \textbf{3.49 (0.34)} \\
NN50    & 1.22 (0.05) & \textbf{1.36 (0.05)} & \textbf{1.34 (0.06)} & \textbf{1.40 (0.06)} & \textbf{1.34 (0.05)} 
        & 1.75 (0.05) & \textbf{3.52 (0.19)} & \textbf{3.48 (0.21)} & \textbf{5.59 (0.33)} & \textbf{3.65 (0.20)} \\
onPure  & 1.25 (0.06) & \textbf{1.30 (0.06)} & \textbf{1.35 (0.08)} & \textbf{1.42 (0.07)} & \textbf{1.32 (0.06)} 
        & 6.09 (0.63) & 7.41 (0.85) & 7.23 (0.74) & 7.97 (0.86) & 7.28 (0.68) \\
offPure & 6.69 (0.33) & 6.94 (0.38) & 7.04 (0.51) & 7.48 (0.49) & 6.99 (0.57) 
        & 30.52 (3.14) & 36.98 (4.21) & 36.11 (3.69) & 39.66 (4.20) & 36.32 (3.44) \\
AT      & 1.15 (0.01) & 21.70 (0.67) & 22.88 (0.74) & 30.91 (0.71) & 20.43 (0.68) 
        & 1.33 (0.05) & 14.00 (1.25) & 13.09 (1.60) & 23.19 (1.09) & 14.78 (2.13) \\
\hline
\end{tabular}
}
\end{table*}

\paragraph*{Scalability} While our primary empirical evaluation focuses on analyzing the framework properties, it is crucial that defenses scale practically. SM-7 extends our evaluation to the CIFAR-10 dataset using both shallow architectures and deeper ResNet-18 backbones \citep{he2016deep}, demonstrating that our stochastic channel variants (such as MIX) successfully scale to higher-dimensional tasks, maintaining superior predictive calibration and adversarial robustness compared to standard deterministic  AT.

\subsection{Case Study: Regression}

We extend our evaluation to regression tasks using the Wine and Energy datasets. Tables \ref{tab:reg_rmse_wine_energy} and \ref{tab:reg_nll_wine_energy} confirm our methodology's effectiveness beyond classification. The undefended baseline exhibits catastrophic failure under attack for both datasets, with RMSE increasing dramatically (e.g. 0.75 to over 15 on Wine) and NLL values exploding, indicating a complete loss of predictive reliability. On Wine, NN50 achieves the best overall performance, maintaining reasonable clean accuracy with modest degradation under attack (RMSE from 0.79 to 0.96 under the worst attack) and well-calibrated predictions (NLL around 1.4). MIX performs competitively, achieving the lowest clean RMSE (0.74) but showing slightly higher attack sensitivity. On Energy, MIX and NN50 perform similarly, with MIX slightly stronger in RMSE and both comparable~in~NLL.

Purification methods retain their trade-offs: while both yield RMSE competitive with the best proactive defenses, onPure provides reasonable calibration whereas offPure suffers from calibration issues. This confirms that our insights regarding AT and purification limitations hold consistently across learning paradigms.

\section{Conclusions}

We have introduced a statistically rigorous and fully Bayesian framework for adversarial defense, addressing a critical gap in promoting adversarial robustness of Bayesian predictive models. By modeling the adversary's actions through a stochastic adversarial channel, our framework makes all probabilistic assumptions transparent, yielding two complementary strategies: a \textit{reactive defense} providing a principled foundation for adversarial purification, and a \textit{proactive defense} that generalizes AT by incorporating uncertainty about the attack. We formally prove that prominent defenses like AT and RS are recovered as limiting cases of our framework. Our empirical results validate the proactive approach, showcasing that explicitly modeling adversarial uncertainty confers superior robustness in both model accuracy and the quality of predictive distributions.

Our work opens several avenues for future research. For reactive defenses, the primary challenge is computational efficiency; promising directions include exploring likelihood-free inference methods to make practical the more adaptive ``online'' version of our model. For proactive defenses, future work involves developing more sophisticated learned adversarial channels, for instance by training an amortized generative adversary. 
 From a theoretical perspective, since our framework makes all assumptions explicit, it naturally facilitates the derivation of novel PAC-Bayesian generalization bounds for adversarial robustness.
More broadly, a crucial next step is to move beyond robust prediction towards robust decision-making, integrating the rich, reliable predictive distributions produced by our framework into decision-theoretic pipelines in high-stakes applications within safety critical~domains.

\subsection*{Declarations}
\textbf{Data.} 
Datasets can be accessed at: MNIST (\url{http://yann.lecun.com/exdb/mnist/}), CIFAR-10 (\url{https://www.cs.toronto.edu/~kriz/cifar.html}), Wine Quality and Energy Efficiency (\url{https://archive.ics.uci.edu/}), and California Housing (\url{https://lib.stat.cmu.edu/datasets/}). Code and hyperparameters are available at \url{https://anonymous.4open.science/r/advDef}.

\textbf{Disclosure Statement.} The authors report there are no competing interests to declare.

\FloatBarrier

\bibliography{biblio}

\appendix
\renewcommand{\thesection}{\arabic{section}}

\newpage
\spacingset{1.8} 

\phantomsection\label{supplementary-material}
\bigskip

\begin{center}

{\large\bf SUPPLEMENTARY MATERIAL}

\end{center}


\section{A Formal Analysis of Bayesian Adversarial Learning}\label{sec:bal}

This section provides a formal analysis of the probabilistic model underlying the Bayesian Adversarial Learning (BAL) framework \citep{ye2018bayesian}. We demonstrate that BAL is based on a set of conditional distributions that are not consistent with any single, valid joint probability distribution and, thus, does not yield a valid posterior. Our critique is formal in nature and does not comment on the empirical utility of the resulting approach. 


BAL considers a game between a learner and an adversary. Given a clean training set $\mathcal{D}$, the adversary's goal is to generate a perturbed dataset $\tilde{\mathcal{D}}$ to mislead the learner. The learner, in turn, seeks to perform robust posterior inference on its model parameters $\theta$. To approximate a robust posterior, BAL proposes a Gibbs sampler that alternates between two conditional distributions, each representing one player's strategy:
\begin{itemize}
    \item The {\em learner's conditional}, which updates the model parameters $\theta$ to {\em minimize} the loss $\ell$ on the current perturbed dataset
  $\tilde{\mathcal{D}}$
    \begin{equation}\label{kkfilomena}
         p(\theta | \tilde{\mathcal{D}}) \propto \exp\{-\ell_\theta(\tilde{\mathcal{D}})\} \cdot p(\theta) .
         \end{equation}
    \item The {\em adversary's conditional}, which generates a perturbed dataset $\tilde{\mathcal{D}}$ to {\em maximize} the same loss $\ell$
    \begin{equation}\label{kkarg}
    p(\tilde{\mathcal{D}} | \theta, \mathcal{D}) \propto \exp\{+\ell_\theta(\tilde{\mathcal{D}}) - \alpha \cdot c(\tilde{\mathcal{D}}, \mathcal{D})\},
    \end{equation}
    where $c(\tilde{\mathcal{D}}, \mathcal{D})$ is the perturbation cost, and the hyperparameter $\alpha$ balances this cost against the learner's loss.
    
\end{itemize}
We prove that these two conditionals are inconsistent.

\begin{proposition}
The BAL conditional distributions for the learner and the adversary defined by (\ref{kkfilomena}) and 
(\ref{kkarg})  cannot be derived from a single, valid joint probability distribution.
\end{proposition}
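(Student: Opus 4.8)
The plan is to derive a contradiction from the hypothetical existence of a joint density $p(\theta, \tilde{\mathcal{D}})$ (on the relevant product space, with $\mathcal{D}$ fixed as data) that simultaneously induces both stated conditionals. The standard tool here is the compatibility criterion for conditional distributions: if $p(\theta \mid \tilde{\mathcal{D}})$ and $p(\tilde{\mathcal{D}} \mid \theta)$ both come from one joint, then their ratio $p(\theta \mid \tilde{\mathcal{D}}) / p(\tilde{\mathcal{D}} \mid \theta)$ must factor as a product $u(\theta) v(\tilde{\mathcal{D}})$ of a function of $\theta$ alone times a function of $\tilde{\mathcal{D}}$ alone (since this ratio equals $p(\theta)/p(\tilde{\mathcal{D}})$, the ratio of the two marginals). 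So first I would write out this ratio explicitly from the two BAL formulas.

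The second step is the bookkeeping of normalizing constants, which is where the subtlety lies. Writing $Z_1(\tilde{\mathcal{D}}) = \int \exp\{-L(\tilde{\mathcal{D}};\theta)\} p(\theta)\, d\theta$ for the learner's normalizer and $Z_2(\theta) = \int \exp\{L(\tilde{\mathcal{D}};\theta) - \alpha c(\tilde{\mathcal{D}},\mathcal{D})\}\, d\tilde{\mathcal{D}}$ for the adversary's, the ratio becomes
\begin{equation*}
\frac{p(\theta \mid \tilde{\mathcal{D}})}{p(\tilde{\mathcal{D}} \mid \theta)} = \frac{\exp\{-L(\tilde{\mathcal{D}};\theta)\}\, p(\theta)\, Z_2(\theta)}{Z_1(\tilde{\mathcal{D}})\, \exp\{L(\tilde{\mathcal{D}};\theta) - \alpha c(\tilde{\mathcal{D}},\mathcal{D})\}} = \frac{p(\theta)\, Z_2(\theta)}{Z_1(\tilde{\mathcal{D}})}\, e^{\alpha c(\tilde{\mathcal{D}},\mathcal{D})}\, e^{-2 L(\tilde{\mathcal{D}};\theta)}.
\end{equation*}
For compatibility this must equal $u(\theta)\, v(\tilde{\mathcal{D}})$. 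All factors except $e^{-2L(\tilde{\mathcal{D}};\theta)}$ are already separated (a function of $\theta$ times a function of $\tilde{\mathcal{D}}$), so the requirement collapses to: $e^{-2L(\tilde{\mathcal{D}};\theta)}$ must itself be separable, i.e. $L(\tilde{\mathcal{D}};\theta) = g(\theta) + h(\tilde{\mathcal{D}})$ for some functions $g, h$.

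The third step is to argue that this separability fails for any loss that actually depends jointly on parameters and (perturbed) data — which is the whole point of a predictive loss. I would make this rigorous by exhibiting the obstruction: pick $\theta_0, \theta_1$ and $\tilde{\mathcal{D}}_0, \tilde{\mathcal{D}}_1$ for which the mixed difference $L(\tilde{\mathcal{D}}_1;\theta_1) - L(\tilde{\mathcal{D}}_1;\theta_0) - L(\tilde{\mathcal{D}}_0;\theta_1) + L(\tilde{\mathcal{D}}_0;\theta_0) \neq 0$, which is precisely the necessary and sufficient condition for $L$ to be non-additively-separable. Such points exist for essentially any nontrivial model (e.g. a logistic or Gaussian likelihood loss), and I would state this as the operative hypothesis — the contradiction then follows, since separability is forced by compatibility but violated by the loss. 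A clean way to phrase the whole argument without the two-point gymnastics is: were the joint to exist, the implied marginal ratio $p(\theta)/p(\tilde{\mathcal{D}})$ would have to absorb $e^{-2L(\tilde{\mathcal{D}};\theta)}$, forcing $L$ additive.

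The main obstacle is ensuring the argument is airtight about integrability and support: the normalizers $Z_1, Z_2$ must be finite for the conditionals to be well-defined in the first place, and one should note that the sign flip in the exponent — the learner sees $e^{-L}$, the adversary sees $e^{+L}$ — is exactly what produces the non-cancelling $e^{-2L}$ term and hence the inconsistency; a framework where both players' conditionals carried the same-signed exponential with matching normalization structure could be compatible, so the proof must pin the blame on the asymmetric min/max construction rather than on some incidental modeling choice. I would close by remarking that this inconsistency means the BAL Gibbs sampler has no stationary joint distribution of the required form, so its output cannot be interpreted as samples from a robust posterior, consistent with the claim in the main text.
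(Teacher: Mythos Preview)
Your proof is correct and takes essentially the same approach as the paper: both arguments hinge on the observation that the opposing signs $e^{-L}$ (learner) and $e^{+L}$ (adversary) produce a non-cancelling $e^{\pm 2L(\tilde{\mathcal{D}};\theta)}$ term that would have to be absorbed by a function of $\tilde{\mathcal{D}}$ alone, which is impossible unless $L$ is additively separable. Your framing via the standard compatibility criterion (the ratio of conditionals must factor as a product of marginals) is equivalent to the paper's direct construction of the joint from the learner's conditional followed by a consistency check, and you are slightly more explicit than the paper in tracking the normalizers and in naming the precise obstruction on $L$.
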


\begin{proof}
Proceed by contradiction. Suppose there actually exists a single joint distribution $p(\theta, \tilde{\mathcal{D}} | \mathcal{D})$ that is consistent with both conditionals. The BAL paper's Gibbs sampler implicitly assumes conditional independence $p(\theta | \tilde{\mathcal{D}}, \mathcal{D}) = p(\theta | \tilde{\mathcal{D}})$.

For our assumed joint distribution to be consistent with the learner's conditional, it must have the following general form
$$ p(\theta, \tilde{\mathcal{D}} | \mathcal{D}) = \exp\{-\ell_\theta(\tilde{\mathcal{D}})\}\cdot p(\theta) \cdot g(\tilde{\mathcal{D}}, \mathcal{D}), $$
where $g(\tilde{\mathcal{D}}, \mathcal{D})$ is a function that is independent of $\theta$.

Next, let us derive the adversary's conditional distribution from this same assumed joint distribution. For a fixed $\theta$, this conditional is proportional to the joint
\begin{equation}\label{kkurug}
p(\tilde{\mathcal{D}} | \theta, \mathcal{D}) \propto p(\theta, \tilde{\mathcal{D}} | \mathcal{D}) 
 \propto \exp\{-\ell_\theta(\tilde{\mathcal{D}})\} \cdot g(\tilde{\mathcal{D}}, \mathcal{D}),
\end{equation}
where the term $p(\theta)$ is absorbed into the proportionality constant.

We then have two expressions, (\ref{kkarg}) and (\ref{kkurug}), for the adversary's conditional that must be consistent. 
This requires
$$ \exp\{+\ell_\theta(\tilde{\mathcal{D}}) - \alpha \cdot c(\tilde{\mathcal{D}}, \mathcal{D})\} \propto \exp\{-\ell_\theta(\tilde{\mathcal{D}})\} \cdot g(\tilde{\mathcal{D}}, \mathcal{D}). $$
By rearranging the terms, we find what $g(\tilde{\mathcal{D}}, \mathcal{D})$ must be proportional to
$$ g(\tilde{\mathcal{D}}, \mathcal{D}) \propto \exp\{2 \cdot \ell_\theta(\tilde{\mathcal{D}}) - \alpha \cdot c(\tilde{\mathcal{D}}, \mathcal{D})\}. $$
This expression for $g(\tilde{\mathcal{D}}, \mathcal{D})$ depends on the model parameters $\theta$ through the loss term $\ell_\theta(\tilde{\mathcal{D}})$, contradicting our initial requirement that $g(\tilde{\mathcal{D}}, \mathcal{D})$ must be independent of $\theta$.

Therefore, the initial assumption is false, and the Gibbs sampler defined by these conditionals does not converge to a valid posterior distribution.
\end{proof}

\section{Distinctness of the Reactive and Proactive Predictive Distributions}
\label{sec:counterexample}

Using a counterexample, this section proves that the PPDs derived from the proactive (during training) and reactive (during operations) defense strategies are, in general, different. We construct a simple linear-Gaussian model where most relevant quantities can be computed analytically, revealing differences between both approaches.

\paragraph*{Model Setup.}
Define a simple model with the following components: a clean univariate covariate model, $x \sim \mathcal{N}(0, \sigma_x^2)$ with known variance $\sigma_x^2$ (thus $\phi = \sigma_x^2 $); a known linear adversarial channel independent from model parameters, $x' = (1+\varepsilon)x + \nu$ where $\nu \sim \mathcal{N}(0, \sigma_\delta^2)$ with known $\sigma_\delta^2$; a linear model, $y \mid x, \theta \sim \mathcal{N}(\theta x, \sigma_y^2)$ with known $\sigma_y^2$; and a Gaussian prior on the single unknown parameter, $\theta \sim \mathcal{N}(0, \sigma_\theta^2)$.

\paragraph*{Strategy 1: Reactive Defense.}
Suppose the model is trained on a single clean data pair $\mathcal{D} = \{(x_1, y_1)\}$.
Upon observing the possibly corrupted covariate $x'$, the robust PPD in equation (1) from the main text can be written as
\begin{align*}
    p(y \mid x', \mathcal{D}) = \iint p(y \mid x, \theta) \, p(x \mid x') \, p(\theta \mid \mathcal{D})\,dx \,d\theta.
\end{align*}
In this setup, the posterior on the parameter is Gaussian, $p(\theta \mid \mathcal{D}) = \mathcal{N}(\theta; \mu_{\text{RE}}, v_{\text{RE}})$, with
\begin{align*}
  \mu_{\text{RE}}=\frac{\sigma_\theta^{2}x_1 y_1}{\sigma_y^{2}+\sigma_\theta^{2}x_1^{2}},
  \quad
  v_{\text{RE}}=\frac{\sigma_\theta^{2}\sigma_y^{2}}
         {\sigma_y^{2}+\sigma_\theta^{2}x_1^{2}}.
\end{align*}
The posterior $p(x \mid x')$ for the latent clean covariate is also Gaussian, with mean $\mathbb{E}[x \mid x'] = \lambda x'$ and variance $\operatorname{Var}(x \mid x') = \tau^2$, where
\begin{align*}
    \lambda =
  \frac{(1+\varepsilon)\sigma_x^{2}}
       {(1+\varepsilon)^{2}\sigma_x^{2}+\sigma_\delta^{2}},
  \qquad
  \tau^{2} = (1-\lambda(1+\varepsilon))\sigma_x^{2}.
\end{align*}
 The robust PPD is non-Gaussian, with its first two moments being
\begin{align*}
    \mathbb{E}_{\text{RE}}[y \mid x', \mathcal{D}] &= \lambda x' \mu_{\text{RE}}, \\
    \operatorname{Var}_{\text{RE}}[y \mid x', \mathcal{D}] &= \sigma_y^2 + \lambda^2 x'^2 v_{\text{RE}} + \tau^2(\mu_{\text{RE}}^2 + v_{\text{RE}}).
\end{align*}

\paragraph*{Strategy 2: Proactive Defense.}
This strategy embeds the adversarial channel within the training likelihood. Following equation (3) from the main text, the resulting ``adversary-aware" likelihood is $p(y_1 \mid x_1, \theta) = \mathcal{N}(y_1; (1+\varepsilon)\theta x_1, \, \sigma_y^2 + \theta^2 \sigma_\delta^2)$. As a consequence, the posterior $p(\theta \mid \mathcal{D})$ is non-Gaussian. At test time, the PPD obtained by integrating over this posterior is
\[
    p(y \mid x', \mathcal{D}) = \int \mathcal{N}(y; \theta x', \sigma_y^2) \, p(\theta \mid \mathcal{D}) \, d\theta.
\]
Denoting the moments of the non-Gaussian posterior as $\mu_{\text{PR}}$ and $v_{\text{PR}}$, the predictive moments are
\begin{align*}
    \mathbb{E}_{\text{PR}}[y \mid x', \mathcal{D}] &= x' \mu_{\text{PR}}, \\
    \operatorname{Var}_{\text{PR}}[y \mid x', \mathcal{D}] &= \sigma_y^2 + x'^2 v_{\text{PR}}.
\end{align*}

\paragraph*{Conclusion.} Both distributions are different. Indeed, assume that both PPDs were identical. Then, from the condition of equal variances, we must have
$$\sigma_y^2 + \lambda^2 x'^2 v_{\text{RE}} + \tau^2(\mu_{\text{RE}}^2 + v_{\text{RE}}) = \sigma_y^2 + x'^2 v_{\text{PR}}.$$
Rearranging this expression yields
$$x'^2 (v_{\text{PR}} - \lambda^2 v_{\text{RE}}) = \tau^2(\mu_{\text{RE}}^2 + v_{\text{RE}}).$$
For this equality to hold for all possible test inputs $x'$, the left-hand side, which is a function of $x'$, must equal the right-hand side, which is a constant. This is only possible if both sides are zero. For the right-hand side to be zero, given that $\mu_{\text{RE}}^2 + v_{\text{RE}} > 0$, it is necessary that $\tau^2 = 0$. This condition corresponds to the degenerate case of a noise-free, perfectly invertible adversarial channel. In any non-degenerate case where channel uncertainty exists ($\tau^2 > 0$), the equality leads to a contradiction. Therefore, both predictive distributions are distinct.

\section{Empirical Validation of the Variational Bound}
\label{app:tightness}

As discussed in the main text, we optimize a tractable lower bound of the ELBO. To rigorously validate that this formulation does not sacrifice robustness compared to tighter bounds, we compare our objective against a tighter lower bound inspired by Importance Weighted Autoencoders (IWAE) \citep{burda2015importance}. By drawing $K$ samples from the adversarial channel, we define a refined objective $\mathcal{L}_{\text{IWAE}, K}$ that becomes strictly tighter as $K$ increases:
\begin{equation*}
    \mathcal{L}_{\text{IWAE}, K}(\psi) = \mathbb{E}_{\theta \sim q_\psi(\theta)} \left[ \sum_{i=1}^N \mathbb{E}_{\mathbf{x}'_{i, 1:K} \sim p(\cdot|\mathbf{x}_i, \theta)} \left[ \log \left( \frac{1}{K} \sum_{k=1}^K p(y_i \mid \mathbf{x}'_{ik}, \theta) \right) \right] \right] - \text{KL}(q_\psi(\theta) \| p(\theta)).
\end{equation*}

We trained a BNN on MNIST using our MIX adversarial channel under both our proposed surrogate ELBO corresponding to equation (11) in the main text and the tighter IWAE objective. For the surrogate ELBO, we varied the number $K$ of MC samples used to estimate the inner expectation (effectively reducing gradient variance). We evaluated Clean Accuracy and Adversarial Accuracy against a PGD attack ($L_2$ norm, $\epsilon=2.0$).

The results in Table~\ref{tab:loose_vs_iwae} demonstrate that our proposed surrogate ELBO objective yields robustness comparable to, and often exceeding, the tighter IWAE-based bound. Notably, increasing $K$ in the surrogate objective significantly improves performance and stability, confirming that the primary bottleneck for training is gradient variance rather than the Jensen gap itself.


\begin{table}[h]
\centering
\caption{Comparison of surrogate ELBO vs. IWAE objectives on MNIST ($\epsilon=2.0$). Values report mean (std), with best highlighted in bold.}
\label{tab:loose_vs_iwae}
\begin{tabular}{c|cc|cc}
\hline
 & \multicolumn{2}{c|}{\textbf{Surrogate ELBO}} & \multicolumn{2}{c}{\textbf{IWAE}} \\
\textbf{K} & \textbf{Clean Accuracy} & \textbf{Adversarial Accuracy} & \textbf{Clean Accuracy} & \textbf{Adversarial Accuracy} \\
\hline
1  & 0.93 (0.09)          & 0.67 (0.27)          & 0.96 (0.05) & 0.71 (0.24) \\
5  & 0.98 (0.02)          & 0.77 (0.15)          & 0.97 (0.02) & 0.63 (0.14) \\
10 & 0.98 (0.02)          & 0.76 (0.19)          & 0.97 (0.04) & 0.71 (0.02) \\
50 & \textbf{0.99 (0.01)} & \textbf{0.83 (0.14)} & 0.98 (0.01) & 0.74 (0.09) \\
\hline
\end{tabular}
\end{table}

\section{Derivation of the Consistent Gradient Estimator}\label{app:unbiased_grad}

As mentioned in the main text, to make our proactive defense operational, we optimize a tractable lower bound of the ELBO. One could alternatively employ a consistent gradient estimator for the true ELBO. With a reparameterizable posterior $\theta=f(\psi,\boldsymbol{\epsilon}),\ \boldsymbol{\epsilon}\sim p(\boldsymbol{\epsilon})$, the gradient of the true objective becomes
\begin{equation*} \label{eq:grad_elbo}
    \begin{aligned}
        &\nabla_\psi \mathcal{L}(\psi) = \mathbb{E}_{\mathbf{\epsilon} \sim p(\mathbf{\epsilon})} \Bigg[ \sum_{i=1}^N \nabla_\psi \log \mathbb{E}_{\mathbf{x}'_i \sim p(\cdot|\mathbf{x}_i,\theta)} \left[ p(y_i|\mathbf{x}'_i, f(\psi, \boldsymbol{\epsilon})) \right] \Bigg]  - 
        & \nabla_\psi \text{KL}(q_\psi(\theta) \| p(\theta)).
\end{aligned}
\end{equation*}
The log term is problematic because its pathwise gradient is a ratio of expectations. If we assume the adversarial channel is also reparameterizable, $\mathbf{x}'=h(\mathbf{x},\theta,\boldsymbol{\eta}),\ \boldsymbol{\eta}\sim p(\boldsymbol{\eta})$, then
\begin{equation*}
    \nabla_\psi \log \mathbb{E}_{\mathbf{x}'_i \sim p(\cdot|\mathbf{x}_i,\theta)} p(y_i|\mathbf{x}'_i, f(\psi, \boldsymbol{\epsilon}))  = \frac{\mathbb{E}_{\boldsymbol{\eta} \sim p(\boldsymbol{\eta})} \left[ \nabla_\psi p(y_i | h(\mathbf{x_i}, \theta, \boldsymbol{\eta}),  f(\psi, \boldsymbol{\epsilon})) \right]}{\mathbb{E}_{\boldsymbol{\eta} \sim p(\boldsymbol{\eta})} \left[ p(y_i | h(\mathbf{x_i}, \theta, \boldsymbol{\eta}),  f(\psi, \boldsymbol{\epsilon})) \right]}.
\end{equation*}
We propose a biased but consistent estimator built directly from samples: for each gradient step we (i) draw $S$ samples $\boldsymbol{\epsilon}_s\sim p(\boldsymbol{\epsilon})$, (ii) compute the corresponding parameters $\theta_s=f(\psi,\boldsymbol{\epsilon}_s)$, (iii) for each data point $i$ draw $K$ adversarial noises $\boldsymbol{\eta}_{isk}\sim p(\boldsymbol{\eta})$ and construct $\mathbf{x}'_{isk}=h(\mathbf{x}_i,\theta_s,\boldsymbol{\eta}_{isk})$, (iv) evaluate $p_{isk}=p(y_i|\mathbf{x}'_{isk},\theta_s)$ and the pathwise gradients $\nabla_\psi\log p_{isk}$, and (v) plug these into the ratio. This yields
\begin{equation*}
    \widehat{g}_{i,s}(\psi)=\sum_{k=1}^K\tilde{w}_{isk}\,\nabla_\psi\log p_{isk},\qquad 
    \tilde{w}_{isk}=\frac{p_{isk}}{\sum_{k'=1}^K p_{isk'}}.
\end{equation*}
Then, the full approximate gradient estimator of the ELBO over a mini-batch $\mathcal{B}$ is
\begin{equation*}
    \nabla_\psi\mathcal{L}(\psi)\;\approx\;\frac{N}{|\mathcal{B}|S}\sum_{s=1}^S\sum_{i\in\mathcal{B}}\widehat{g}_{i,s}(\psi)-\frac{1}{S}\sum_{s=1}^S\nabla_\psi\big[\log q_\psi(\theta_s)-\log p(\theta_s)\big].
\end{equation*}
This estimator produces a biased but consistent approximation whose bias vanishes as $K,S\to\infty$.


\section{Adversarial Channels}\label{app:channel}

A central modeling choice in our framework is the design of the adversarial channel, which formalizes assumptions about the adversary. It is key to differentiate these channels, which are integral components of the defense methodology, from the separate attacks used for final robustness evaluation. Each channel  $p(\mathbf{x}' \mid \mathbf{x}, \theta)$ stochastically transforms a clean input $\mathbf{x}$ into an adversarial counterpart $\mathbf{x}'$, possibly using model parameters $\theta$ (and sometimes the label $y$ during training). We detail the specific channels implemented in our experiments.

\subsection{Baseline Noise Channel}

The identity introduces no adversarial objective. It only returns the clean input, optionally with small Gaussian augmentation,
\[
\mathbf{x}' = \mathbf{x} + \boldsymbol{\eta},
\qquad
\boldsymbol{\eta} \sim \mathcal{N}(0, \sigma^2 I).
\]
Equivalently, for continuous inputs,
\[
p_{\mathrm{id},\sigma}(\mathbf{x}'\mid \mathbf{x},\theta)
=
\mathcal{N}(\mathbf{x}';\mathbf{x},\sigma^2 I),
\]
which does not depend on $y$ or $\theta$. The case $\sigma=0$ gives the exact identity channel. This component is used both as a baseline and as the non-adversarial component of mixture channels that balance clean and attacked samples.

\subsection{Optimization-based Adversarial Perturbation Models}


Throughout this subsection, let $\ell_\theta^{(a)}(\mathbf{z})$ denote the attack to be maximized by the adversary for a given attack type $a$. For the standard likelihood-based attack this is the negative log-likelihood $-\log p(y\mid \mathbf{z},\theta),$
where we substitute $y$ with the model's prediction $\hat{y}$ if the ground-truth label is unavailable. Other choices include CW or entropy-based objectives. Optimization-based channels are defined by moving $\mathbf{x}$ in a direction that increases $\ell_\theta^{(a)}$ and then adding stochastic augmentation.

\paragraph*{Naive One-Step Adversary.}
One-step channel takes one normalized gradient ascent step from the clean input and then injects Gaussian noise,
\[
\mathbf{x}'
=
\mathbf{x}
+
\epsilon
\frac{\nabla_{\mathbf{x}}\ell_\theta^{(a)}(\mathbf{x})}
{\left\|\nabla_{\mathbf{x}}\ell_\theta^{(a)}(\mathbf{x})\right\|_2}
+
\boldsymbol{\eta},
\qquad
\boldsymbol{\eta} \sim \mathcal{N}(0,\sigma^2 I),
\]
with the convention that the gradient term is zero when the denominator vanishes. These adversarial examples are further augmented with Gaussian noise to increase variability.

\paragraph*{Mixed One-Step Channel.}
The mixed one-step channel is a mixture of the identity channel and a one-step attack channel. For a clean-sample weight $\rho\in[0,1]$,
\[
p_{\mathrm{mixOS}}(\mathbf{x}'\mid\mathbf{x},\theta)
=
\rho\,p_{\mathrm{id},\sigma}(\mathbf{x}'\mid\mathbf{x},\theta)
+
(1-\rho)\,p_{\mathrm{OS},\sigma}(\mathbf{x}'\mid\mathbf{x},\theta),
\]
where $p_{\mathrm{OS},\sigma}$ denotes the Gaussian-augmented one-step channel above. Equivalently, draw $C\sim\mathrm{Bernoulli}(1-\rho)$ and then sample from the identity channel if $C=0$ or from the one-step channel if $C=1$. In practice, implementation of this channel is done constructing each mini-batch with the prescribed proportions of clean and attacked samples.

\paragraph*{PGD Adversary.}
The PGD channel is a stronger iterative attack. Let
$\mathcal{B}_\epsilon(\mathbf{x})
=
\{\mathbf{z}:\|\mathbf{z}-\mathbf{x}\|_2\le \epsilon\}
$
be the allowed perturbation set. Starting from $\mathbf{z}^0\in\mathcal{B}_\epsilon(\mathbf{x})$, often sampled randomly, PGD performs $T$ projected ascent steps,
\[
\mathbf{z}^{t+1}
\leftarrow
\Pi_{\mathcal{B}_\epsilon(\mathbf{x})}
\left(
\mathbf{z}^t
+
\alpha
\frac{\nabla_{\mathbf{z}}\ell_\theta^{(a)}(\mathbf{z}^t)}
{\left\|\nabla_{\mathbf{z}}\ell_\theta^{(a)}(\mathbf{z}^t)\right\|_2}
\right),
\qquad t=0,\ldots,T-1,
\]
and then samples
\[
\mathbf{x}'=\mathbf{z}^T+\boldsymbol{\eta},
\qquad
\boldsymbol{\eta}\sim\mathcal{N}(0,\sigma^2 I).
\]
Here $\alpha$ is the step size and $\Pi_{\mathcal{B}_\epsilon(\mathbf{x})}$ denotes projection onto the $L_2$ perturbation ball around the original input. PGD is more computationally expensive than the one-step channel but produces stronger adversarial examples for both training and evaluation. As for the one-step adversary, the PGD adversary can be a component of a mixture with the identity channel.

\subsection{General Adversarial Mixture Channel}
More generally, let $p_k(\mathbf{x}'\mid\mathbf{x},\theta)$, $k=1,\ldots,K$, be component channels and let $\pi_k\ge 0$ with $\sum_{k=1}^K\pi_k=1$. The corresponding adversarial channel is the mixture
\[
p_{\boldsymbol{\pi}}(\mathbf{x}'\mid\mathbf{x},\theta)
=
\sum_{k=1}^K
\pi_k\,p_k(\mathbf{x}'\mid\mathbf{x},\theta).
\]
Equivalently, draw $C\sim\mathrm{Categorical}(\pi_1,\ldots,\pi_K)$ and then sample $\mathbf{x}'$ from $p_C(\cdot\mid\mathbf{x},\theta)$.

The MIX channel used in the classification experiments is the specific mixture

$$p_{\mathrm{MIX}} 
=
0.4\,p_{\mathrm{id},\sigma}
+
0.2\,p_{\mathrm{OS},\sigma}^{\mathrm{CE}}
+
0.2\,p_{\mathrm{OS},\sigma}^{\mathrm{CW}}
+
0.2\,p_{\mathrm{OS},\sigma}^{\mathrm{ENT}},$$

while in the regression experiments it is $p_{\mathrm{MIX}} = 0.4\,p_{\mathrm{id},\sigma} + 0.6\,p_{\mathrm{OS},\sigma}^{\mathrm{MSE}}$, with fixed $\epsilon=1$ in the one-step components. Thus, 40\% of the channel mass is placed on clean Gaussian augmentation and the remaining 60\% is allocated to one-step attacks (spread uniformly over cross-entropy, CW, and entropy objectives for classification, and using an MSE objective for regression). These probabilities are implemented by building mini-batches with the corresponding proportions of clean and attacked samples.

\subsection{Learned Adversarial Perturbation Models}

As one instantiation of the adversarial channel detailed in Section 3, we develop learned adversarial models that generate perturbations conditioned on both input and label. These generative models parameterize perturbation distributions rather than producing deterministic ones, enabling gradient-based AT against stochastic attacks. 
Training follows a GAN-style procedure \citep{NIPS2014_5423}. At each batch iteration, the adversarial model is first updated to maximize a loss function associated with model performance (cross-entropy for classification and RMSE for regression). Subsequently, the main model is adversarially trained using samples drawn from $p(\mathbf{x'}|\mathbf{x}, \theta)$ generated by the adversarial model.

\paragraph*{Convolutional Adversarial Model}
For MNIST, we design a convolutional perturbation generator preserving spatial structure. Given input image $\mathbf{x}$ and label $y$ (one-hot encoded), the image is processed through two convolutional layers (16 filters, 3×3 kernels, same padding) with ReLU activations, followed by a third convolutional layer matching the input channel dimension. The spatial features are flattened, combined with a learned label projection via element-wise addition, and reshaped to the original dimensions. This produces a mean perturbation $\boldsymbol{\mu}$ added to the input  $\mathbf{x}$ and per-pixel log-standard deviations $\log\boldsymbol{\sigma}$ from an additional convolutional layer, jointly parameterizing a Gaussian perturbation distribution.

\paragraph*{Fully Connected Adversarial Model}
For regression datasets, we employ a fully connected perturbation generator that concatenates a learned projection of the scalar label $y$ with input features $\mathbf{x}$. This representation is processed through two hidden layers (128 units each) with ReLU activations, followed by a final layer matching the input dimensionality. Analogously to the convolutional variant, this produces a mean perturbation $\boldsymbol{\mu}$ (added to $\mathbf{x}$) and per-feature log-standard deviations $\log\boldsymbol{\sigma}$ through separate projections, parameterizing a Gaussian perturbation distribution for diverse, label-conditioned adversarial perturbations.

\section{Additional Information on Experiments}\label{app:exps}

\subsection{Experimental Setup}

This subsection details the experimental setup used throughout our experiments. Full hyperparameter specifications, including learning rates, batch sizes, decay rates, and number of training iterations, are available in the repository \url{https://anonymous.4open.science/r/advDef}.

\subsubsection{Model Architectures}

We transform the deterministic architectures described below into Bayesian neural networks by placing prior distributions over all network weights and biases. Specifically, we employ factorized Gaussian priors with zero mean and unit variance $p(\theta) = \mathcal{N}(0, I)$.

Inference is performed via variational inference, optimizing a mean-field variational posterior distribution $q_\psi(\theta)$ to approximate the actual posterior $p(\theta|\mathcal{D})$, using the Adam optimizer \citep{kingma2014adam} with exponential learning rate decay.  

\paragraph*{Classification Model}
For MNIST classification, we employ a convolutional neural network comprising two convolutional blocks followed by a fully connected output layer. Each block consists of a convolutional layer (16 and 32 filters, respectively, and 3×3 kernels), ReLU activation, and 2×2 average pooling. The resulting feature maps are flattened and passed through a dense layer producing logits for the 10 digit classes.

\paragraph*{Regression Model}
For regression tasks, we employ a fully connected feedforward architecture with two hidden layers of 16 neurons each with ReLU activations. The input is processed through these hidden layers before a final dense layer outputs a scalar prediction.

\subsection{Computational Overhead}

As the adversarial robustification methodologies present distinct trade-offs between training and inference efficiency,
let us evaluate the computational costs associated with each robustification approach.

The proactive defense modifies the training objective, resulting in moderately increased training time with no test-time overhead beyond standard Bayesian inference. Conversely, the reactive defense preserves standard training costs but introduces significant test-time computation and memory overhead. The inference overhead depends critically on the number of parameter ($S$) and input ($N$) samples used in the weighted estimate. To reduce memory cost, the estimate is computed on a random subsample of the training set rather than on the full dataset. With our default configuration ($S=5$, $N=100$), the reactive defense incurs substantial overhead for high-dimensional inputs (over 1000× for MNIST images) but remains competitive for low-dimensional regression tasks (approximately 5× overhead). Additionally, the reactive defense requires maintaining the training set in memory simultaneously, leading to considerable memory requirements that scale with input dimensionality.

In addition to the datasets from the main text, we introduce the California Housing dataset \citep{pace1997sparse}. Table \ref{tab:computational_times} summarizes training and inference times across all datasets and methodologies. Note that standard training has not been as extensively optimized as the robust training procedures, which may explain the comparable or even superior throughput observed for robust training on regression datasets. These results enable practitioners to select the appropriate approach based on their computational constraints and data characteristics: the proactive defense is preferable when low-latency inference is critical or when working with high-dimensional inputs, while the reactive defense is viable for low-dimensional problems where its modest inference overhead and memory requirements may be acceptable given the simplified training procedure.

\begin{table}[h]
\centering
\caption{Training and inference times (in ms) across datasets. Training times are reported per batch iteration, and inference times per sample, both averaged over 100 iterations. Robust Training corresponds to the MIX model (proactive defense), while Robust Inference denotes the onPure model (online reactive defense), the two defenses with the highest computational cost.}
\label{tab:computational_times}
\resizebox{\textwidth}{!}{
\begin{tabular}{lcccc}
\hline
\textbf{Dataset} & \textbf{Standard Training} & \textbf{Robust Training} & \textbf{Standard Inference} & \textbf{Robust Inference} \\
\hline
MNIST & 2.48 & 4.69 & 1.60 & 2470 \\
Wine Quality & 0.75 & 0.70 & 0.63 & 3.39  \\
Energy Efficiency & 0.74 & 0.66 & 0.65  & 3.47 \\
California Housing & 0.74 & 0.68 & 0.62 & 3.44 \\
\hline
\end{tabular}
}
\end{table}

All experiments reported in this section were conducted on a single NVIDIA A100 with 82GB memory.


\subsection{Additional Results}

Our evaluation in the main text focuses on multi-step PGD and multiple adversarial objectives (PGD, PGD+, entropy attack) specifically because they target predictive uncertainty. For the AT baseline reported in the main experiments, we utilized a one-step PGD adversary to ensure a fair comparison, as our proactive defense was similarly trained using one-step channels.

To verify that this choice does not distort our conclusions, we also evaluated an AT baseline trained against a stronger, multi-step PGD adversary. Table~\ref{tab:mnist_multiat_full} presents the Adversarial Accuracy and Negative Log-Likelihood (NLL) for this model (denoted as \texttt{MNISTmultiAT}) across varying perturbation strengths. These results confirm that the use of a stronger adversary during training does not alter the conclusions drawn in the main text.

\begin{table*}[h]
\centering
\caption{Adversarial Accuracy and NLL of model \texttt{MNISTmultiAT} under varying perturbation strengths $(\epsilon)$.}
\label{tab:mnist_multiat_full}
\resizebox{\textwidth}{!}{
\begin{tabular}{c|cccc|cccc}
\hline
\textbf{Strength} & \multicolumn{4}{c|}{\textbf{Adversarial Accuracy }} & \multicolumn{4}{c}{\textbf{Proper Scoring Rule / NLL }} \\
($\epsilon$) & \textbf{One-Step} & \textbf{PGD} & \textbf{PGD+} & \textbf{Entropy} & \textbf{One-Step} & \textbf{PGD} & \textbf{PGD+} & \textbf{Entropy} \\
\hline
\textbf{Clean} & \multicolumn{4}{c|}{\textbf{0.95}} & \multicolumn{4}{c}{\textbf{---}} \\
\hline
0.01 & 0.96 & 0.95 & 0.94 & 0.95 & 0.37 & 0.32 & 0.41 & 0.31 \\
0.51 & 0.88 & 0.86 & 0.92 & 0.94 & 1.10 & 1.05 & 0.60 & 0.45 \\
1.01 & 0.75 & 0.77 & 0.85 & 0.90 & 2.02 & 2.27 & 0.87 & 0.74 \\
1.50 & 0.59 & 0.41 & 0.74 & 0.87 & 4.41 & 4.25 & 2.55 & 0.88 \\
2.00 & 0.41 & 0.15 & 0.56 & 0.83 & 6.56 & 15.49 & 5.02 & 1.60 \\
2.50 & 0.28 & 0.06 & 0.45 & 0.81 & 10.08 & 19.74 & 7.40 & 2.20 \\
3.00 & 0.18 & 0.01 & 0.34 & 0.69 & 13.81 & 21.56 & 9.51 & 2.86 \\
\hline
\end{tabular}
}
\end{table*}

Figures~\ref{fig:SEP_onestep} and~\ref{fig:SEP_entropy_pgd} present performance metrics against attack intensity for PGD1 and ENT attacks, respectively.
Figure~\ref{fig:SEP_onestep} shows results consistent with those reported in Figure~3 of the main text, confirming the same qualitative trends across all defenses.
The same qualitative trends remain in Figure \ref{fig:SEP_entropy_pgd}, with the main difference being that AT performs noticeably worse than the undefended 
BL, showing lower accuracy and higher NLL values across all perturbation strengths. This suggests that the entropy-targeted perturbations exploit weaknesses in the AT model’s predictive calibration, leading to poorer overall performance despite its nominal robustness under standard attacks.

\begin{figure}[!h]
    \centering
    \includegraphics[width=.7\linewidth]{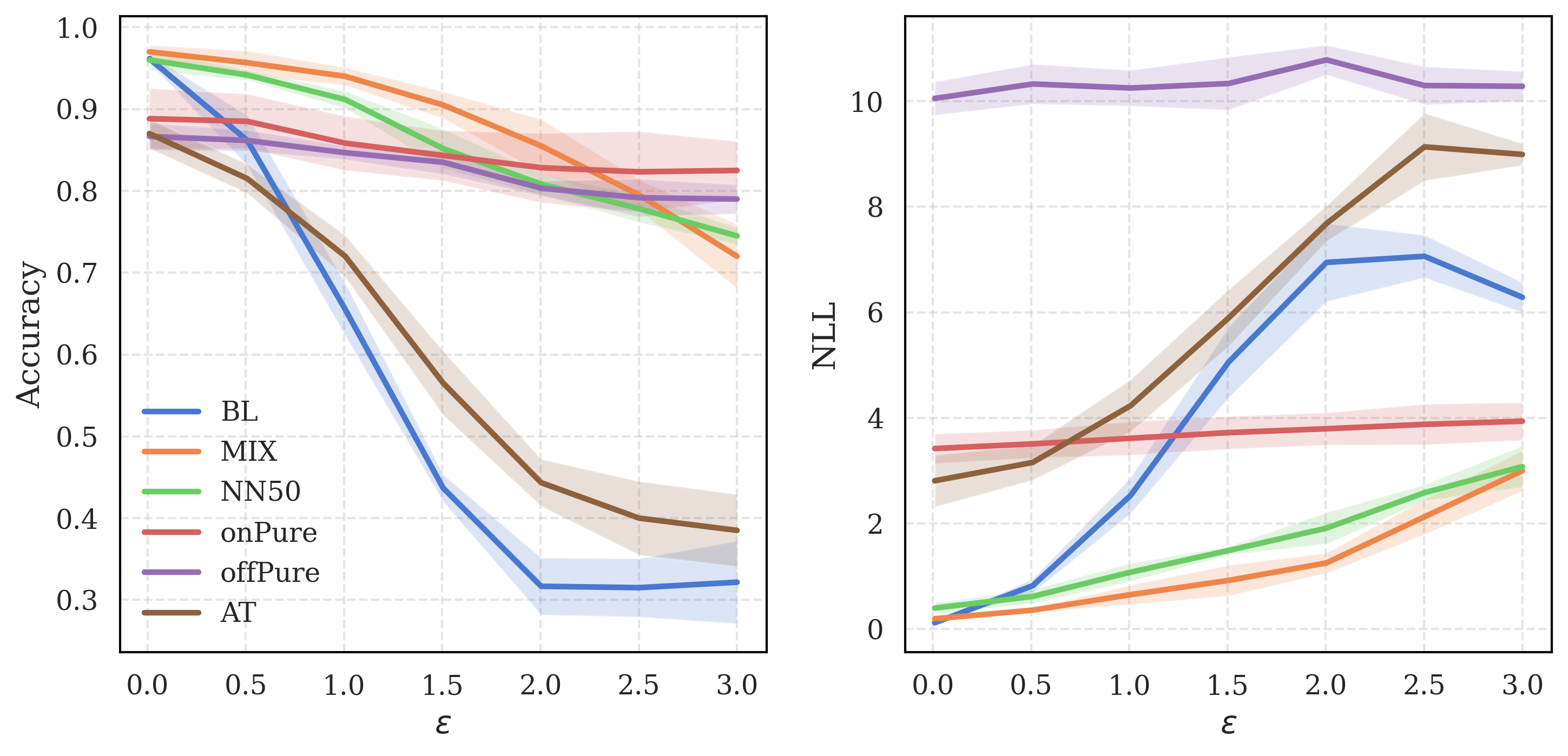}
    \caption{Performance on MNIST against PGD1 attack. Left: Accuracy. Right: NLL. Shaded bands indicate $\pm 1$ standard deviation.}
    \label{fig:SEP_onestep}
\end{figure}

\begin{figure}[!h]
    \centering
    \includegraphics[width=.7\linewidth]{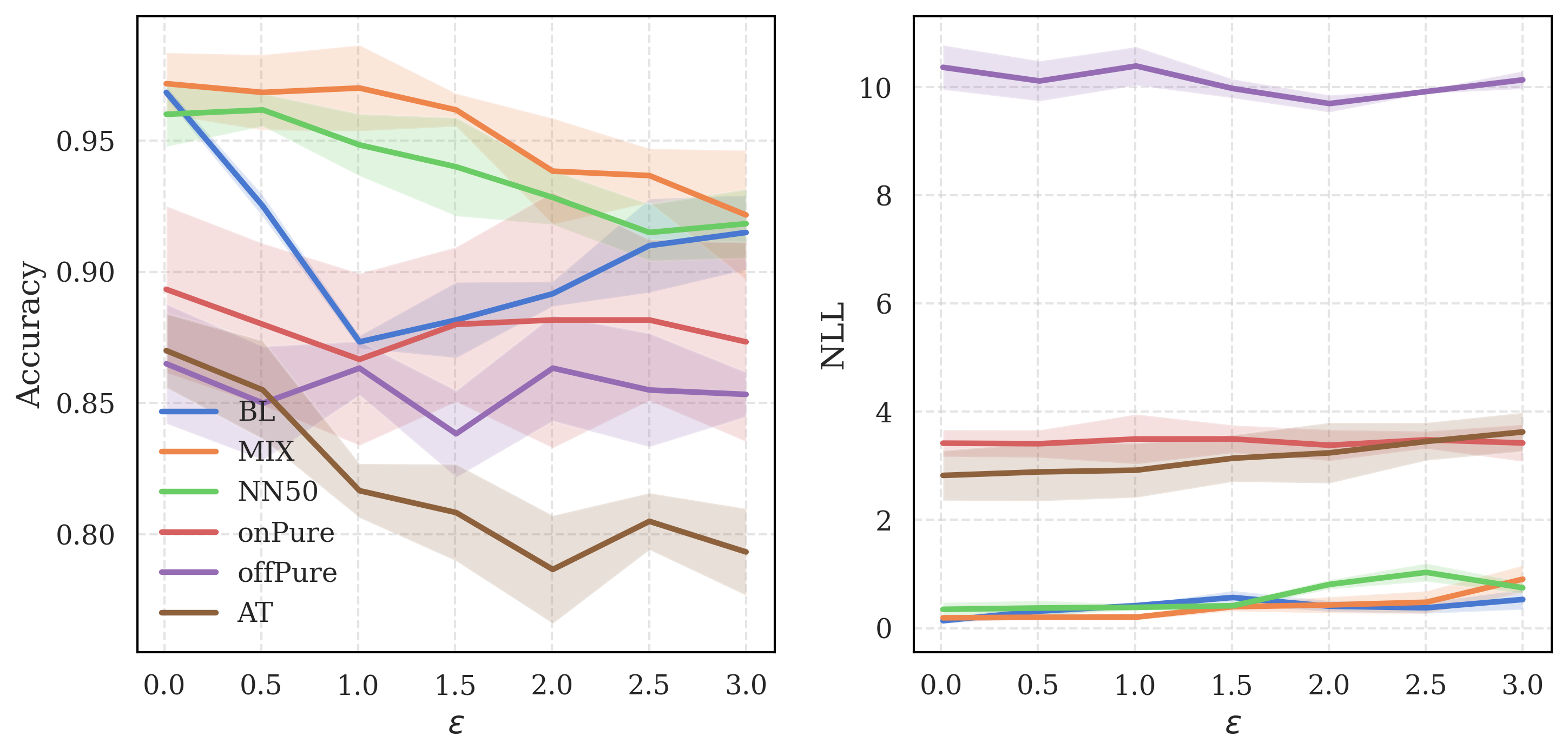}
    \caption{Performance on MNIST against Entropy Maximization attack. Left: Accuracy. Right: NLL. Shaded bands indicate $\pm 1$ standard deviation.}
    \label{fig:SEP_entropy_pgd}
\end{figure}

Figures \ref{fig:extra_bl} and \ref{fig:extra_mix} show illustrative examples of adversarial attacks and the corresponding predictive distributions for identical inputs from the MNIST dataset. The BL model is easily misled by most perturbations, often producing incorrect predictions or displaying high uncertainty even when the correct class remains among the top outputs. In contrast, the MIX model shows clear resilience to these attacks, maintaining reliable predictions. Notably, after being attacked, the BL model’s probability mass becomes more dispersed across classes, reflecting degraded calibration and reduced reliability in the correct label, whereas the MIX model preserves a sharper and more consistent predictive distribution.

\begin{figure}[ht]
\begin{multicols}{2}
    \includegraphics[width=\linewidth]{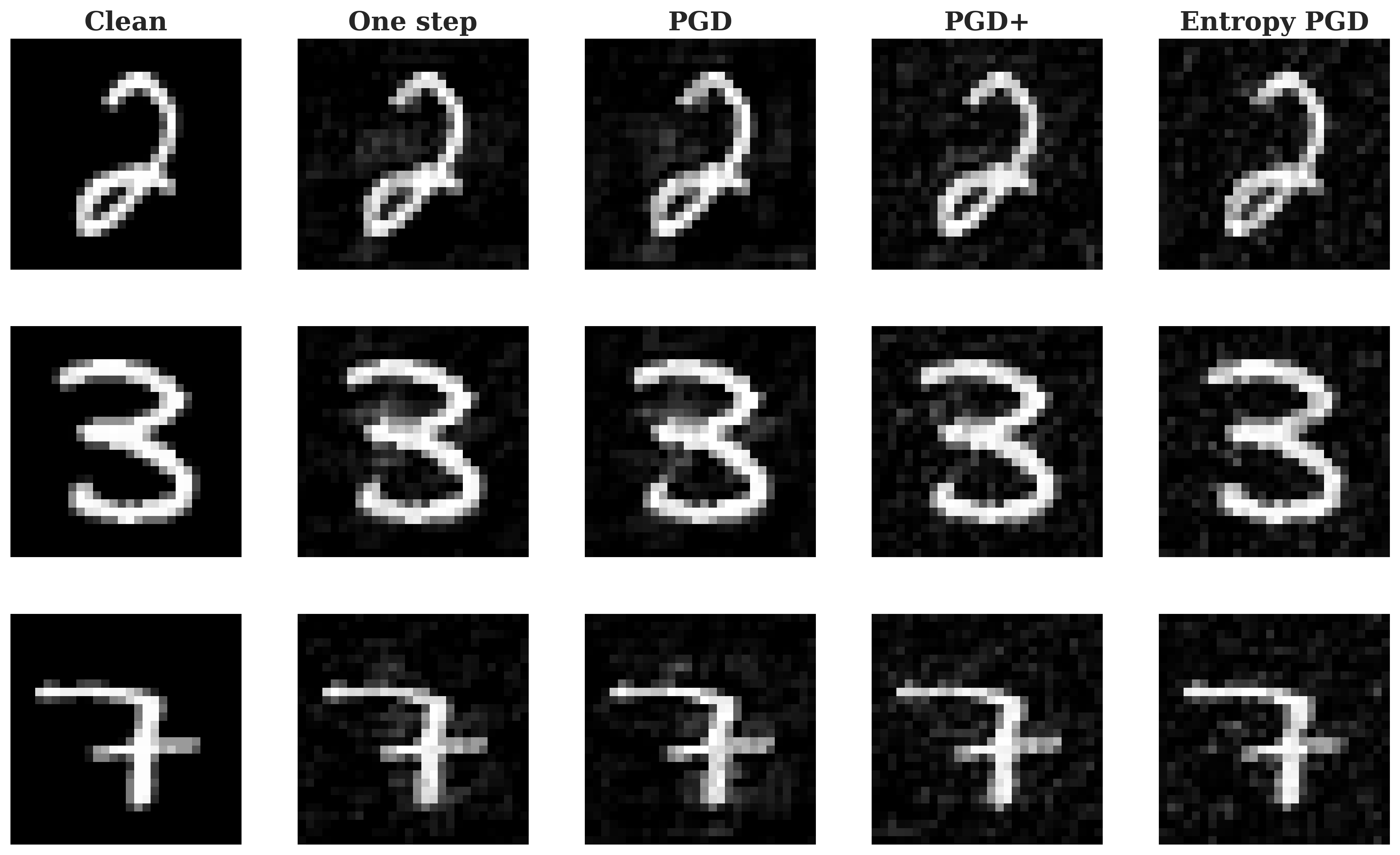}
    \subcaption{Adversarial Examples: Input images generated by different attacks.}\label{subfig:bl_ex}\par 
    \includegraphics[width=\linewidth]{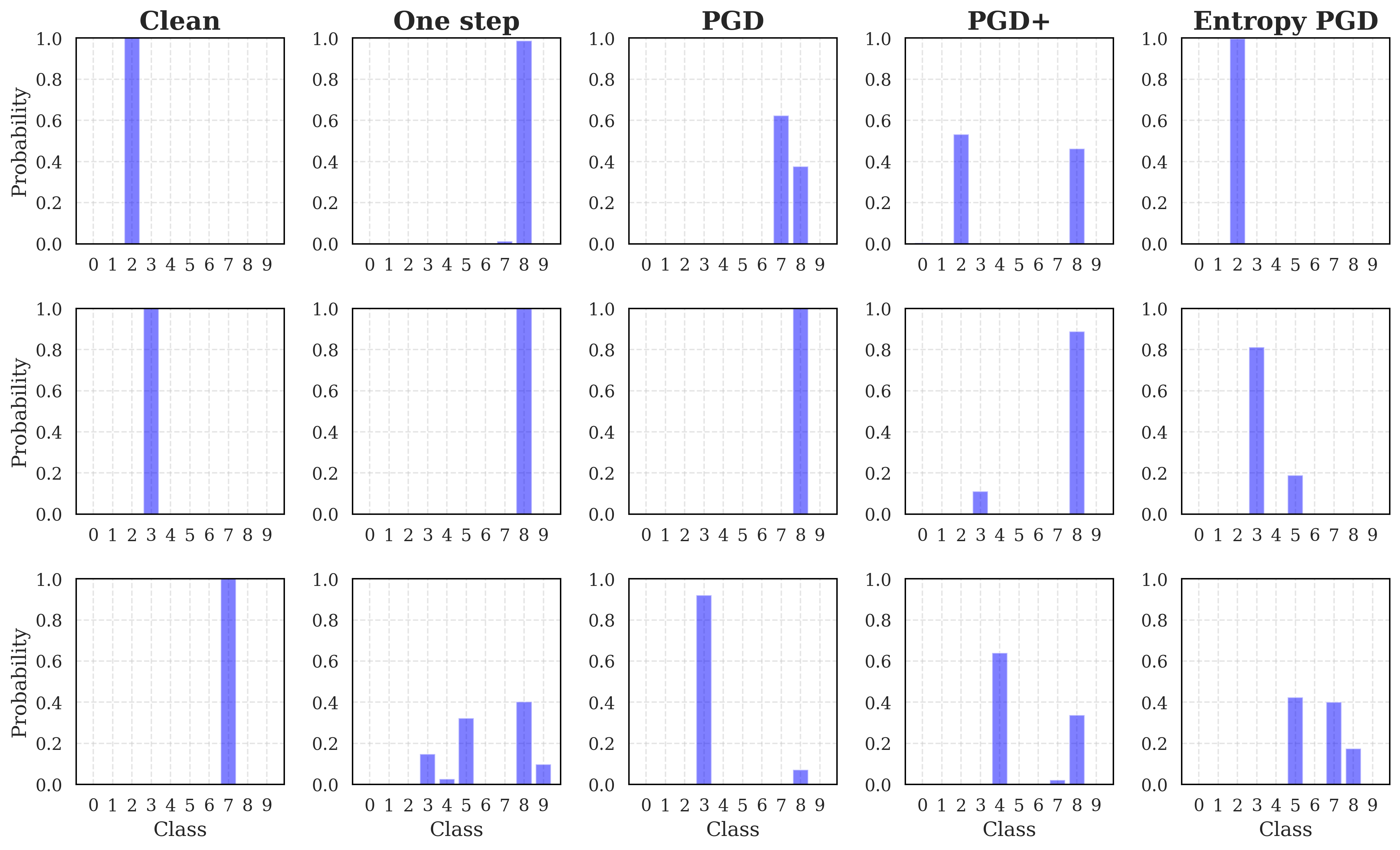}
    \subcaption{Predictive Distributions. Note the high confidence in incorrect classes or dispersed probability mass under attack.}\label{subfig:bl_distr}\par 
    \end{multicols}
\caption{Qualitative results for the undefended Baseline model on MNIST.}
\label{fig:extra_bl}
\end{figure}

\begin{figure}[ht]
\begin{multicols}{2}
    \includegraphics[width=\linewidth]{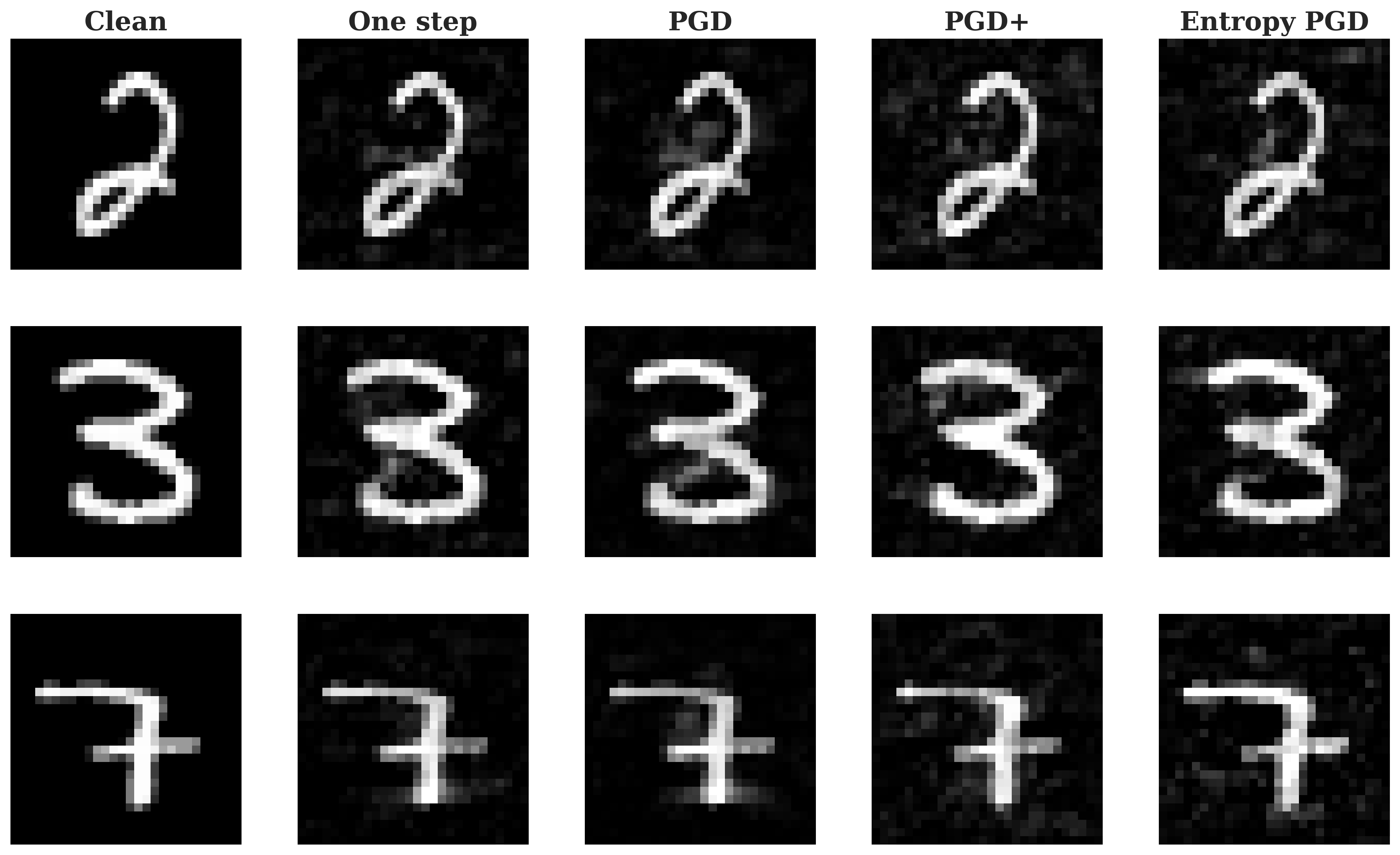}
    \subcaption{Adversarial examples.}\label{subfig:mix_ex}\par 
    \includegraphics[width=\linewidth]{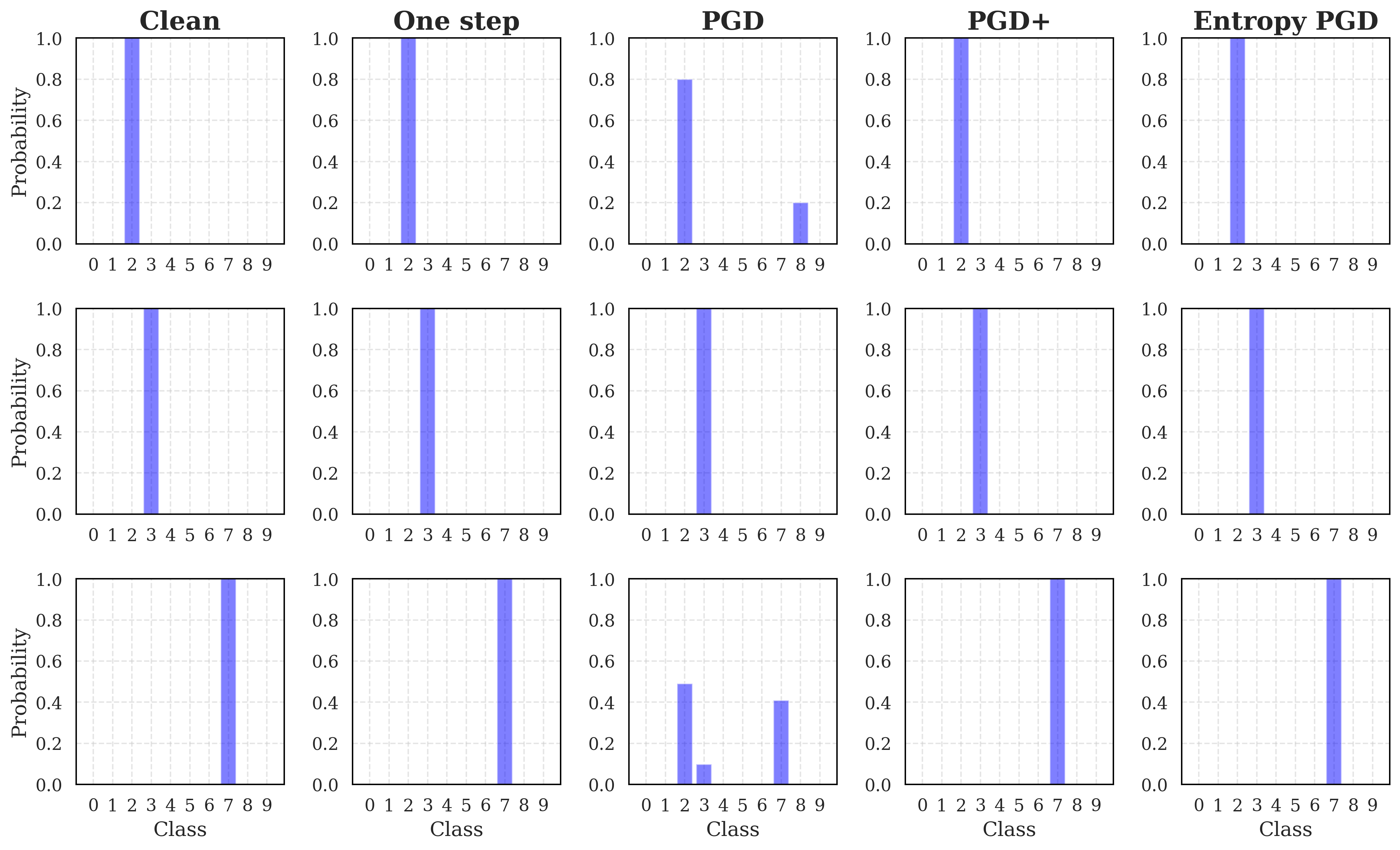}
    \subcaption{Predictive distributions. The model maintains sharp, correct predictive distributions across most attack types.}\label{subfig:mix_distr}\par 
    \end{multicols}
\caption{Qualitative results for the MIX (Proactive) model on MNIST.}
\label{fig:extra_mix}
\end{figure}

Table~\ref{tab:extra_regression} presents additional regression results complementing those in the main text. Overall, the same qualitative patterns hold. The BL model performs well on clean data but degrades sharply under attack, while AT exhibits unstable behavior, performing worse than BL in most adversarial settings. NN50 maintains the most consistent performance across attacks, achieving the lowest NLLs and stable RMSE values. MIX remains competitive, showing slightly higher NLLs but comparable robustness. As before, purification-based methods display clear trade-offs: onPure achieves reasonable RMSEs yet suffers from moderate calibration issues, whereas offPure remains poorly calibrated despite stable errors. 

\begin{table*}[!h]
\centering
\caption{RMSEs and NLLs on California dataset at $\epsilon=2$ under different attack types.}
\label{tab:extra_regression}
\setlength{\tabcolsep}{4pt}
\resizebox{\textwidth}{!}{
\begin{tabular}{l|ccccc|ccccc}
\hline
\multicolumn{1}{c}{\textbf{Model}}  & \multicolumn{5}{c}{\textbf{RMSE}} & \multicolumn{5}{c}{\textbf{NLL}} \\
\hline
& \textbf{Clean} & \textbf{PGD1} & \textbf{PGD} & \textbf{PGD$^+$} & \textbf{ENT} & \textbf{Clean} & \textbf{PGD1} & \textbf{PGD} & \textbf{PGD$^+$} & \textbf{ENT} \\
\hline
BL      & \textbf{0.20 (0.01)} & \textbf{0.44 (0.02)} & \textbf{0.43 (0.02)} & 0.76 (0.02) & 0.45 (0.01)  & \textbf{-0.15 (0.08)} & 2.03 (0.10) & 1.96 (0.12) & 4.96 (0.29) & 1.84 (0.21)\\
MIX     & 0.23 (0.01) & 0.53 (0.01) & 0.52 (0.02) & 0.63 (0.01) & 0.52 (0.01) & -0.00 (0.03) & 1.71 (0.19) & 1.73 (0.07) & 2.36 (0.16) & 1.56 (0.14)  \\
NN50    & 0.33 (0.01) & \textbf{0.42 (0.01)} & \textbf{0.41 (0.01)} & \textbf{0.46 (0.02)} & \textbf{0.42 (0.02)} & 0.33 (0.03) & \textbf{0.58 (0.05)} & \textbf{0.56 (0.04)} & \textbf{0.70 (0.04)} & \textbf{0.62 (0.05)}  \\
onPure  & 0.44 (0.02) & \textbf{0.42 (0.01)} & 0.44 (0.01) & \textbf{0.45 (0.02)} & \textbf{0.43 (0.02)} & 2.11 (0.31) & 2.23 (0.35) & 2.34 (0.43) & 2.75 (0.59) & 2.41 (0.53) \\
offPure & 0.43 (0.02) & 0.43 (0.01) & 0.45 (0.02) & 0.45 (0.03) & 0.45 (0.03) & 9.73 (1.06) & 10.24 (1.11) & 10.20 (0.97) & 11.90 (1.16) & 10.39 (1.31)  \\
AT      & 0.31 (0.01) & 1.58 (0.03) & 1.69 (0.01) & 3.82 (0.05) & 1.97 (0.04)  & 0.44 (0.01) & 3.49 (0.30) & 3.67 (0.27) & 14.01 (0.62) & 2.91 (0.09) \\
\hline
\end{tabular}
}
\end{table*}

\FloatBarrier

\section{Empirical Scalability on CIFAR-10}

While the primary contribution of this work is methodological, demonstrating the scalability of this framework to higher-dimensional data and deeper architectures is of practical importance. To showcase the potential of our proactive defense beyond the MNIST benchmark, we evaluated our framework on the CIFAR-10 dataset \citep{krizhevsky2009learning}. 

Our aim is to demonstrate that the proposed methods formally improve upon the standard AT baseline in both robustness and predictive calibration, even as the dimensionality of the learning task increases.

\subsection{Performance on Shallow Architectures}
The models were trained using an adversarial strength of $\epsilon = 1$ and subsequently evaluated across different $\epsilon$ values. Table~\ref{tab:cifar10_results} reports the accuracy and NLL at a strictly larger test strength of $\epsilon = 2$.

\begin{table}[h!]
    \centering
    \caption{CIFAR-10 Evaluation at $\epsilon=2$ (Models trained at $\epsilon=1$). Values report mean (std) across two independent runs.}
    \label{tab:cifar10_results}
    \renewcommand{\arraystretch}{1.0}
    \vspace{2pt}
    \begin{tabular}{l|c|cccc}
        \hline
        \multicolumn{6}{c}{\textbf{Accuracy}} \\
        \hline
        \textbf{Model} & \textbf{Clean} & \textbf{One Step} & \textbf{PGD} & \textbf{PGD+} & \textbf{Entropy PGD} \\
        \hline
        Baseline (BL) & 0.89 (0.01) & 0.45 (0.01) & 0.44 (0.05) & 0.43 (0.00) & 0.53 (0.04) \\
        MIX (Ours) & 0.91 (0.01) & \textbf{0.80 (0.01)} & \textbf{0.79 (0.01)} & \textbf{0.81 (0.01)} & \textbf{0.83 (0.01)} \\
        Standard AT & \textbf{0.92 (0.00)} & 0.71 (0.02) & 0.73 (0.02) & 0.75 (0.00) & 0.80 (0.01) \\
        \hline\hline
        \multicolumn{6}{c}{\textbf{NLL}} \\
        \hline
        \textbf{Model} & \textbf{Clean} & \textbf{One Step} & \textbf{PGD} & \textbf{PGD+} & \textbf{Entropy PGD} \\
        \hline
        Baseline (BL) & 0.38 (0.03) & 1.56 (0.02) & 1.51 (0.00) & 1.61 (0.04) & 1.38 (0.02) \\
        MIX (Ours) & 0.35 (0.05) & \textbf{0.61 (0.06)} & \textbf{0.62 (0.02)} & \textbf{0.61 (0.02)} & \textbf{0.52 (0.03)} \\
        Standard AT & \textbf{0.30 (0.05)} & 0.73 (0.03) & 0.73 (0.00) & 0.70 (0.03) & 0.61 (0.00) \\
        \hline
    \end{tabular}
\end{table}

The results show that our MIX defense successfully scales to larger datasets, maintaining superior calibration (lower NLL) and accuracy under attack compared to standard point-estimate AT. 

\subsection{Compatibility with Deep Architectures (ResNet-18)}
Furthermore, to demonstrate compatibility with deeper neural architectures, we obtained results on CIFAR-10 using a ResNet-18 backbone, shown in Table~\ref{tab:resnet18}. Models were trained using an adversarial strength of $\epsilon = 1$, with results reported for a strictly larger$\epsilon=2$.

While standard AT proved highly unstable in this specific unoptimized setup (failing to converge beyond 40\% clean accuracy), our proposed Bayesian methods (MIX and NN) remained remarkably stable, providing strong robustness and clean accuracy. This indicates that despite the known optimization challenges associated with scaling Bayesian neural networks, our proactive methodology remains structurally sound and effective.

\begin{table}[h!]
    \centering
    \caption{ResNet-18 Accuracy on CIFAR-10 under different attack strategies with a fixed perturbation strength $\epsilon=2$.
    }
    \label{tab:resnet18}
    \vspace{2pt}
    \begin{tabular}{l|ccc}
        \hline
        \textbf{Model} & \textbf{Clean} & \textbf{One Step} & \textbf{PGD} \\
        \hline
        Baseline (BL) & \textbf{0.81} & 0.32 & 0.32 \\
        MIX (Ours) & 0.79 & 0.52 & 0.53 \\
        NN (Ours) & 0.76 & \textbf{0.58} & \textbf{0.58} \\
        Standard AT & 0.41 & 0.05 & 0.05 \\
        \hline
    \end{tabular}
\end{table}

\end{document}